%% file: main.tex
\documentclass{article} % For LaTeX2e

\input{math_commands.tex}

\input{packages}

\title{
\raisebox{-0.25ex}{\includegraphics[height=\ht\strutbox]{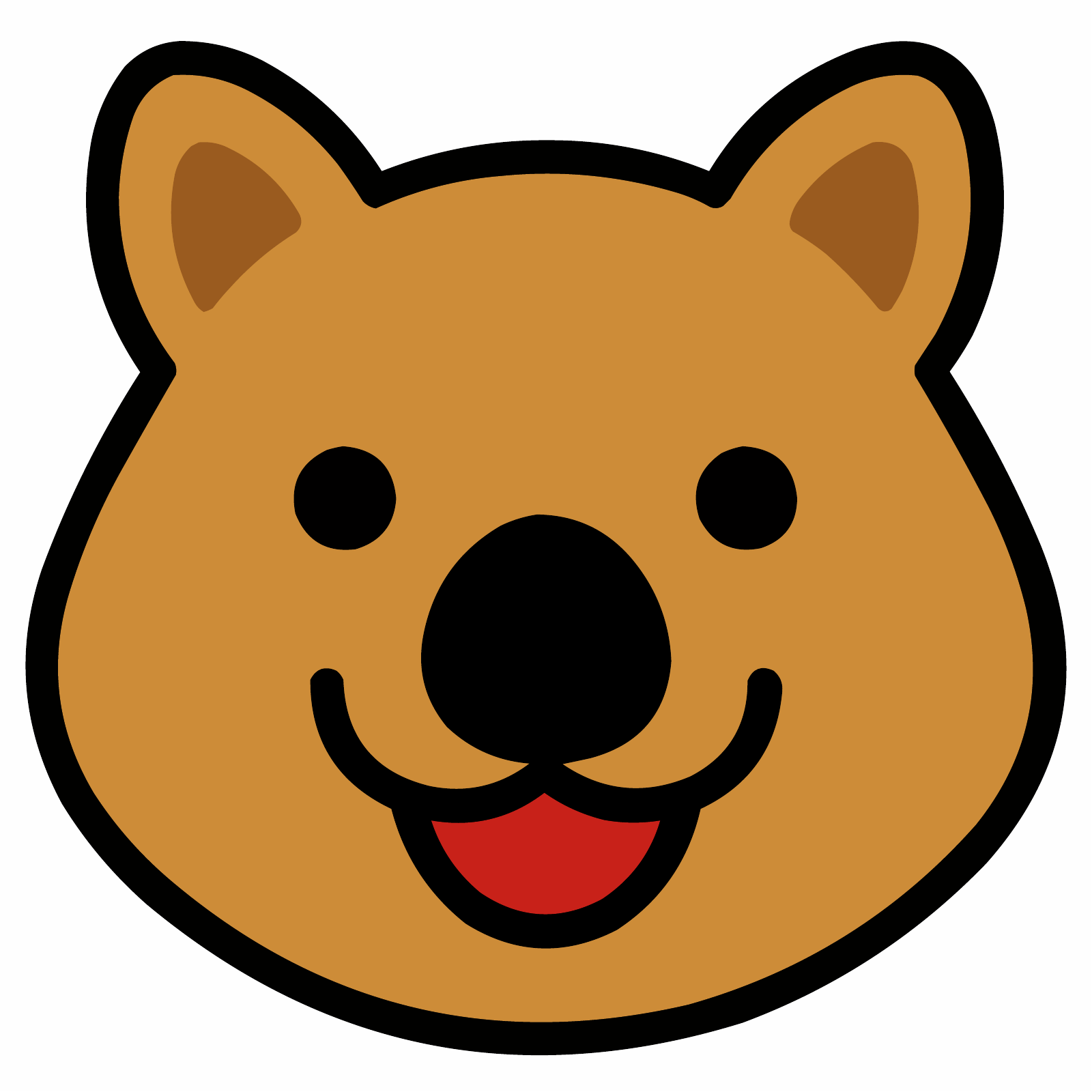}}
\ourmethod{}: Query-oriented KV selection for Efficient LLM Prefill
}
\author{Dalton Jones, Junyoung Park,  Matthew Morse, Mingu Lee, Chris Lott, M. Harper Langston \\
\hspace{4.7cm} Qualcomm AI Research \thanks{Qualcomm AI Research is an initiative of Qualcomm Technologies, Inc.}  \\
\texttt{\{daltjone, junpark, mingul, clott, hlangsto\}@qti.qualcomm.com} \\
}

\iclrfinalcopy % Uncomment for camera-ready version, but NOT for submission.
\begin{document}

\maketitle

\begin{abstract}
% We present \ourmethod{}, a training-free and hardware-agnostic sparse attention mechanism designed to accelerate transformer inference under chunked prefill. 
We present \ourmethodfull{}, a training-free and hardware agnostic sparse attention algorithm for accelerating transformer inference under chunked prefill.
% The key insight is that attention to each key is dominated by a small subset of queries, which can be identified through their angular similarity to the mean query. By leveraging this query--key geometry, 
While many queries focus on a smaller group of keys in the attention operator, we observe that queries with low cosine similarity with respect to the mean query interact more strongly with more keys and have the greatest contribution to final attention logits.
By prioritizing these low cosine similarity queries, the behavior of full attention during the prefill stage can be closely approximated.
\ourmethod{} leverages this observation, accelerating attention by (1) first retaining a small set of representative queries and (2) then subselecting the keys most aligned with those queries.
% efficiently subselects the most relevant keys and values without requiring retraining or custom kernels. 
% Experiments show that \ourmethod{} achieves near-dense accuracy on long-context benchmarks such as LongBench and RULER with only a fraction of the KV budget, outperforms the full model on Needle-In-A-Haystack, and demonstrates strong performance on the Math500 reasoning benchmark, where it even surpasses dense attention in some cases. In addition, \ourmethod{} provides substantial efficiency gains, including up to $5\times$ speedup for the attention module and $3\times$ reduction in time-to-first-token for end-to-end inference.
Through experiments on Needle-In-A-Haystack, LongBench, RULER, and Math500, we show that, while realizing a $3\times$ reduction in time-to-first-token, $5\times$ speedup in attention on Nvidia GPUs and up to nearly a $7\times$ speedup on Intel Xeon CPUs, \ourmethod{} achieves near-baseline accuracy, utilizing 88\% fewer key-value pairs per attention evaluation.
\end{abstract}

\input{contents/intro_condensed}

\input{contents/background_trimmed}
\input{contents/method_trimmed}

\input{contents/results}

\input{contents/related_work}

\input{contents/conclusion}

\input{contents/ethics_and_reproducibility}

\bibliography{reference}
\bibliographystyle{iclr2026_conference}

% \appendix
% \section{Appendix}
\input{contents/appendix}

\end{document}

%% file: math_commands.tex
\usepackage{amsmath,amsfonts,bm}

\def\eqref#1{equation~\ref{#1}}
\def\1{\bm{1}}

\DeclareMathAlphabet{\mathsfit}{\encodingdefault}{\sfdefault}{m}{sl}
\SetMathAlphabet{\mathsfit}{bold}{\encodingdefault}{\sfdefault}{bx}{n}

\DeclareMathOperator*{\minimize}{minimize}

%% file: packages.tex
\usepackage{iclr2026_conference,times}

\usepackage{graphicx} % Required for inserting images
\usepackage{url}
\usepackage[dvipsnames]{xcolor}
\usepackage{multirow}

\usepackage{graphicx} % Required for inserting images
\usepackage{microtype}
\usepackage{graphicx}
\usepackage{booktabs} % for professional tables
\usepackage[dvipsnames,svgnames]{xcolor}
\usepackage{caption}
\usepackage{subcaption}

\usepackage[colorlinks=true, linkcolor=NavyBlue, citecolor=NavyBlue, urlcolor=magenta]{hyperref}
\usepackage[capitalize,noabbrev,nameinlink]{cleveref}

\usepackage{color}

\newcommand{\ourmethod}{\textsc{QuoKA}}
\newcommand{\ourmethodfull}{\textsc{QuoKA}: 
\underline{Qu}ery-\underline{o}riented \underline{K}V selection for efficient \underline{A}ttention}

\usepackage{enumitem} 

\usepackage{algorithm}
\usepackage{algpseudocode}
\usepackage{amsthm}

\usepackage{adjustbox}

\usepackage{color}

\usepackage{wrapfig}

\newtheorem{theorem}{Theorem}

\usepackage[normalem]{ulem}

\usepackage{array}

\usepackage{xfrac}

\usepackage{xargs}

\usepackage[T1]{fontenc}

\newcommandx{\JP}[2][1=]{%
  \todo[inline,
        linecolor=blue,
        color=blue!25,
        bordercolor=blue,
        #1,
        size=\small]{JP: #2}%
}

\newcommandx{\DJJ}[2][1=]{%
  \todo[inline,
        linecolor=red,
        backgroundcolor=red!25,
        bordercolor=red,
        #1,
        size=\small]{DJ: #2}%
}
\newcommandx{\MM}[2][1=]{%
  \todo[inline,
        linecolor=orange,
        backgroundcolor=orange!25,
        bordercolor=orange,
        #1,
        size=\small]{MM: #2}%
}

%% file: contents/intro_condensed.tex
\section{Introduction}
\label{sec:intro}

A major bottleneck in large language model (LLM) inference is prefill latency, which can account for more than 70\% of total runtime. This latency is especially significant on CPUs, consumer-grade GPUs, and edge accelerators, where resources are limited \citep{agrawal2024taming,kamath2025pod, xu2025fast}. To mitigate this, recent deployments increasingly adopt \textit{chunked prefill}, which divides input into blocks to improve scheduling and utilization \citep{agrawal2023sarathi,lai2025flexprefill,kwon2023efficient}. Nevertheless, due to quadratic complexity of the underlying attention, chunked prefill remains computationally expensive.
% A major bottleneck in large language model (LLM) inference is prefill latency: processing long prompts before generating the first token can account for 60-70\% of total runtime, especially on CPUs, consumer GPUs, and edge devices with limited resources \needref{}. To mitigate this cost and satisfy hardware constraints, on-device deployment of LLMs often employs \textit{chunked prefill}, which divides input into blocks to improve scheduling and utilization \citep{agrawal2023sarathi,lai2025flexprefill,kwon2023efficient}. Nevertheless, chunked prefill remains expensive because the underlying attention operation has quadratic complexity. 
% Recent methods such as sparse attention seek to mitigate this by identifying and exploiting sparsity in attention matrices to decrease complexity.
Methods such as sparse attention seek to overcome this complexity by identifying and exploiting sparsity in attention. Combining chunked prefill with sparse attention offers a promising path to sub-quadratic complexity and substantial latency improvements in resource constrained environments.

% Sparse attention methods can be grouped into two families. Kernel-level approaches sparsify QK multiplication with block or fixed patterns \citep{child2019generating,dao2022flashattention,xu2025xattention,jiang2024minference,gao2024seerattention,zhang2025spargeattn}. Although they reduce FLOPs and have been extensively studied in optimized GPU kernels, they are less effective for chunked prefill due to design mismatch; small per-chunk query blocks reduce arithmetic intensity, rigid patterns incur overhead, and custom kernels hinder portability. This makes kernel-level sparsity difficult to translate into reliable chunked prefill speedups in practice. In contrast, KV-level methods operate directly on the KV cache, subselecting relevant key--value pairs to approximate the full attention output \citep{ribar2024sparq,tang2024quest,zhu2024sampleattention,yang2025moretrainingfreesparseattention,singhania2024loki}. This directly reduces cache size and memory traffic while remaining compatible with dense kernels, making them portable across heterogeneous hardware. However, since existing methods mainly target generation, where KV pairs are chosen for a single query, they often lose accuracy in prefill that requires aggregating relevance across many queries.

Sparse attention algorithms can be broadly categorized into two families: \textit{pattern-based} and \textit{query-dependent} approaches. Pattern-based approaches impose fixed sparsity patterns (e.g., block, strided, or banded) on $QK^\top$ \citep{child2019generating,dao2022flashattention,xu2025xattention,jiang2024minference,gao2024seerattention,zhang2025spargeattn}, often achieving speedups through kernel-level optimizations; however, due to dynamic compute graph and KV cache memory bandwidth overhead under chunked prefill, their benefits are limited.  Furthermore, reliance on custom kernels limits the portability of pattern-based approaches across heterogeneous hardware. 
%These methods often achieve speedups through kernel-level optimizations, but suffer on long context chunked prefill tasks due to KV cache memory bandwidth requirements.

In contrast, query-dependent approaches operate directly on the KV cache, adaptively subselecting the most relevant KVs for a given query \citep{ribar2024sparq,tang2024quest,zhu2024sampleattention,yang2025moretrainingfreesparseattention,singhania2024loki}.  While remaining compatible with optimized kernels and offering strong portability benefits, this strategy reduces both attention complexity and memory traffic. However, existing query-dependent methods are primarily designed for generation, where KVs are selected for a single query; in such cases, it is more straightforward to determine which KVs are relevant for the given query. During prefill, when relevant KVs are selected for many queries at once, this can result in significant performance degradations. Under chunked prefill, where important KVs are repeatedly subselected for multiple queries, these degradations become more pronounced. 

% To address this gap, we present \ourmethodfull{}, a geometry-guided KV-level sparse attention method optimized for chunked prefill. The key insight is that attention to each key is dominated by a small subset of queries identifiable by their angular similarity, enabling aggressive KV cache reduction while preserving accuracy.
To address the drawbacks of existing approaches, we propose \ourmethodfull{}, a \textit{training-free sparse attention method optimized for chunked prefill} built upon the following observation: \textbf{queries with lower cosine similarity to the mean query attend to the majority of keys}.

\ourmethod{} leverages this observation to retain a small subset of representative queries and subselect KVs with which they strongly interact.  While preserving accuracy, \ourmethod{} accelerates attention during prefill and reduces the number of KVs through three steps:  \emph{Query Subselection}, which retains only the most informative queries; \emph{Cosine-Similarity Scoring}, which provides a stable, bounded proxy for query-key relevance; and \emph{Group-Aware Aggregation}, which efficiently preserves compatibility with modern architectures such as grouped-query attention (GQA) \citep{ainslie2023gqa}. These steps can be seen in~\cref{fig:chunked_prefill_with_ourmethod}, discussed in greater detail in Section~\ref{sec:method}. Unlike kernel-level sparsity, which depends on custom primitives, \ourmethod{} relies on standard linear algebra operations, allowing for platform portability and straightforward deployment. Our contributions are summarized as follows: 
\begin{itemize}[leftmargin=*, itemsep=0.3em, parsep=0pt]
\item \textbf{\ourmethod{}}: a hardware-agnostic and training-free query-oriented sparse attention method for chunked prefill, built on standard linear algebra kernels.
\item \textbf{Accuracy under sparsity:} near-baseline results on long-context benchmarks (Needle-in-a-Haystack, RULER, LongBench, Math500), outperforming existing sparse attention methods.
\item \textbf{Latency reduction:} up to $5\times$ attention speedup and $3\times$ lower time-to-first-token (TTFT) on enterprise-class GPUs, and $7\times$  and 5-6$\times$ speedups on CPUs and on consumer GPUs.
\item \textbf{Generalization across architectures:} validated on diverse decoder-only LLM families (Llama3, Qwen3, SmolLM, GPT-OSS) and on RoPE/NoPE and MoE-based LLMs.
\item \textbf{Robustness to hyperparameters:} accuracy degrades gradually with sparsity and remains stable across parameter choices, enabling deployment under varied constraints.
\end{itemize}

\begin{figure}[t]
\vspace{-3mm}
     \centering
     \begin{subfigure}[b]{0.63\linewidth}
         \centering
         \includegraphics[width=\textwidth]{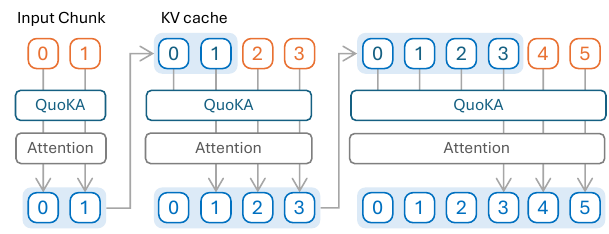}
         \caption{Chunked Prefill with \ourmethod{}}         \label{subfig:chunked_prefill_with_ourmethod_a}
     \end{subfigure}
     \hfill
     \begin{subfigure}[b]{0.33\linewidth}
         \centering
         \includegraphics[width=\textwidth]{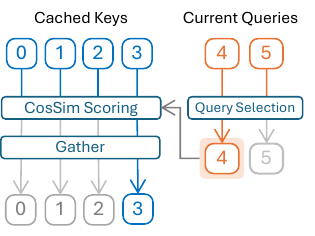}
         \caption{KV selection with \ourmethod{}}
         \label{subfig:chunked_prefill_with_ourmethod_b}
     \end{subfigure}
     \caption{\textbf{Overview of chunked prefill with \ourmethod{}:} \textbf{(a)} A prompt of 6 tokens is divided into three chunks of 2 tokens each. For each chunk, \ourmethod{} subselects the KV cache and feeds the reduced cache into a dense attention kernel. \textbf{(b)} Subselection is performed by applying query subselection based on cosine dissimilarity, followed by key subselection using query–key cosine similarity.}
     \label{fig:chunked_prefill_with_ourmethod}
\vspace{-3mm}
\end{figure}

%% file: contents/background_trimmed.tex
\section{Background}
\label{sec:background}
% \vspace{-1mm}
\subsection{Transformers and Attention}
% \vspace{-1mm}
Since their introduction by \cite{vaswani2017attention}, transformers have become the dominant sequence modeling paradigm. A transformer block consists of two main computational steps: the attention operator and a feed-forward network (FFN). In this work, we focus on the attention primarily. 

For an input sequence $X = [x_1, \ldots, x_T]$, the attention operator $\mathrm{Attn}(X)$ is defined as
\begin{equation}
\label{eq:attention}
% \mathrm{Attn}(X) = \mathrm{Softmax}\left( \frac{QK^\top}{\sqrt{d}} + M \right)V = AV,
\mathrm{Attn}(X) = \mathrm{Softmax}\left(QK^\top / \sqrt{d} + M \right)V = AV,
\end{equation}
where keys $K = XW_K$, queries $Q = XW_Q$, and values $V = XW_V$ are learned projections of the input tokens, with $W_K, W_Q, W_V \in \mathbb{R}^{d \times d}$. With zeros below the diagonal and $-\infty$ elsewhere, the mask $M$ enforces autoregressive causality.

\subsection{Attention latency: prefill vs generation}
Autoregressive text generation in transformer LLMs can be divided into two stages: \textit{prefill} and \textit{generation}, both with distinct latency characteristics. During the \textit{prefill} stage, the entire input prompt is processed to initialize the KV cache, requiring $O(T^2)$ operations from dot products between $T$ queries and $T$ keys in addition to the FFN computation for each token. As a result for long prompts, attention dominates prefill latency. Optimized kernels such as FlashAttention \citep{dao2022flashattention,dao2023flashattention,shah2024flashattention} improve memory locality and utilization, but they do not reduce the quadratic complexity; asymptotic speedups can be achieved by reducing the effective number of KVs attended to by each query. In the generation stage, a single new query attends to the $T$ cached  KVs. For short generation lengths, performance is memory-bound by the large transfer cost of FFN weights. However for longer outputs, the KV cache dominates the memory footprint and bandwidth. Reducing the number of stored KVs both speeds up the attention computation and reduces data transfer, making KV reduction essential for accelerating generation-heavy tasks (e.g., reasoning or code generation).

\input{algorithms/ourmethod}
\subsection{Chunked Prefill}
% \vspace{-1mm}
Given the causality in \cref{eq:attention}, \textit{chunked prefill} \citep{agrawal2023sarathi,holmes2024deepspeed} partitions
the input sequence $X$ into non-overlapping chunks of size $B_\text{CP}$ and process the chunks sequentially: $X = [X_0, X_1, \ldots, X_{N_{B}-1}], N_B=\lceil T/B_\text{CP} \rceil$, where $X_i = [x_{B_\text{CP}i}, \ldots, x_{B_\text{CP}(i+1)-1}]$ denotes the $i$-th chunk. For chunk $i$, let $K_{<i}$ and $V_{<i}$ denote the concatenation of keys and values from all preceding chunks. Then:
\begin{equation}
\label{eq:block_attention}
% \mathrm{Attn}(X_i) \;=\; \mathrm{Softmax}\!\left(
% \frac{Q_i [\,K_i \;|\; K_{<i}\,]^\top}{\sqrt{d}} + M_i \right)
% [\,V_i \;|\; V_{<i}\,],
\mathrm{Attn}(X_i) \;=\; \mathrm{Softmax}\!\left(
Q_i [\,K_i \;|\; K_{<i}\,]^\top / \sqrt{d} + M_i \right)
[\,V_i \;|\; V_{<i}\,],
\end{equation}
where $[\,\cdot \;|\; \cdot\,]$ denotes concatenation along the sequence dimension and $M_i$ enforces causality within $X_i$ and against future tokens. Chunked prefill is particularly important for inference on edge devices such as mobile hardware or consumer GPUs where bandwidth and memory capacity are constrained. It also improves cloud inference throughput by enabling interleaved prefill and decode requests, increasing GPU utilization \citep{holmes2024deepspeed,agrawal2025efficient}. %Notably, the token generation phase can be interpreted as chunked prefill with $b=1$ and a non-empty KV cache.

% \vspace{-1mm}
\subsection{Attention sparsity and sparse attention}
\label{subsec:attn_sparsity}
% \vspace{-1mm}
High sparsity in the attention matrix $A$ has been widely observed in LLMs \citep{zhang2024h2o,xiaoefficient,oren2024transformers,zhu2024sampleattention,park2025keydiff}. 
% This sparsity can be exploited to reduce the cost of attention by selecting only the most important KVs to incoming query tokens $Q$ into an active KV cache $(\hat{K}, \hat{V})$:
This sparsity allows for a reduction in attention cost by selecting the most relevant KVs for incoming queries into an active cache $(\hat{K}, \hat{V})$ such that:
%. Many sparse attention methods take the following approach:
\begin{align}
\label{eq:sparse_attention}
I = \texttt{topk}(f(Q,K), B_{SA}), \quad 
\hat{K} &= \texttt{gather}(K, I), \quad
\hat{V} = \texttt{gather}(V, I), 
% \\
% \mathrm{SparseAttn}(Q,K,V) &= \mathrm{Softmax}\!\left( \frac{Q\hat{K}^\top}{\sqrt{d}} + M \right)\hat{V}, 
% \mathrm{SparseAttn}(Q,K,V) &= \mathrm{Softmax}\!\left( Q\hat{K}^\top / \sqrt{d} + M \right)\hat{V}, 
\end{align}
where $f$ satisfies
% \vspace{-2mm}
\begin{align}
    \label{eq:min_attention}
    % \minimize_{f(Q,K)} \left|\left| \mathrm{Softmax}\!\left( \frac{QK^\top}{\sqrt{d}} + M \right)V -\mathrm{Softmax}\!\left( \frac{Q\hat{K}^\top}{\sqrt{d}} + M \right)\hat{V}\right|\right|,
    \minimize_{f(Q,K)} \left|\left| \mathrm{Softmax}\!\left( QK^\top / \sqrt{d} + M \right)V -\mathrm{Softmax}\!\left( Q\hat{K}^\top / \sqrt{d} + M \right)\hat{V}\right|\right|,
\end{align}
% \vspace{-1mm}
where $f$ is usually constrained to be more efficient than the attention computation itself.

% Approaches such as \citep{zhao2019explicit,ribar2024sparq,tang2024quest,singhania2024loki,yang2024tidaldecode} have primarily focused on the generation phase, where the scoring function $f$ operates on a single query $Q$. \textcolor{red}{A straightforward extension to the multi-query setting computes attention scores for each query separately and then aggregates them by taking a simple mean across queries. However, this naive averaging dilutes the distinctive contributions of individual queries and often leads to downstream performance degradation (see \cref{table:longbench_compare}).} More recently, studies such as \cite{zhang2025spargeattn,gao2024seerattention,zhu2024sampleattention,jiang2024minference,lai2025flexprefill} have explored handling multiple queries simultaneously; yet many of these approaches rely on custom kernels, which limits their optimized deployment across diverse platforms beyond well-studied NVIDIA GPUs.

Prior work \citep{zhao2019explicit,ribar2024sparq,tang2024quest,singhania2024loki,yang2024tidaldecode} generally focused on the generation phase, where the scoring function $f$ operates on a single query $Q$. Extending these methods to the multiple-query setting with by averaging over queries significantly degrades performance (see \cref{table:longbench_compare}), highlighting the need for better ways to aggregate information from multiple queries. 
While recent work \cite{zhang2025spargeattn,gao2024seerattention,zhu2024sampleattention,jiang2024minference,lai2025flexprefill} attempt to address this, these methods typically depend on custom CUDA kernels on NVIDIA GPUs.
This limits compatibility with hardware-tuned kernels like FlashAttention and hinders broader deployment outside of data centers.  %, including on consumer-grade hardware typical of edge devices.

%% file: algorithms/ourmethod.tex
\begin{algorithm}[t]
\caption{KV cache sub-selection using \ourmethod{}}
\label{alg:ourmethod}
\begin{algorithmic}[1]   % [1] = line numbers
\Require queries $Q$, keys $K$, values $V$, prefill chunk size $B_\text{CP}$, selective attention budget $B_{\text{SA}}$, 
         max queries $N_Q$, number of KV heads $n_{\mathrm{KV}}$, 
         number of attention heads $n_{\mathrm{Q}}$

\Comment{Query Sub-selection}
\If{$B_\text{CP} > N_Q$}
  \State $M_Q \gets \texttt{mean}(Q, \texttt{dim=2})$ % O(|Q|)
  \State $S_Q \gets \mathrm{CosSim}(Q, M_Q)$ % O(|Q|)
  \State $Q \gets \texttt{gather}(\texttt{topk}(-S_Q, N_Q), Q)$ %O(|Q|) bc quickselect
% \Else
  % \State $Q \gets Q$
\EndIf

\Comment{Efficient Cosine Similarity via Pre-aggregation}
\State $Q \gets Q / \texttt{norm}(Q, \texttt{dim=-1})$ \Comment{$(b, n_{q}, N_Q, d)$}
\State $K \gets K / \texttt{norm}(K, \texttt{dim=-1})$ % O(T)
\Comment{$(b, n_{\text{KV}}, T, d)$}

\State $\bar{Q} \gets \texttt{mean}\!\left(Q.\texttt{reshape}(b, n_{\mathrm{KV}}, \tfrac{n_{\mathrm{Q}}}{n_{\mathrm{KV}}}, N_Q, d), \texttt{dim=2}\right)$ \Comment{$(b, n_{KV}, N_Q, d)$} % say reshape -> O(N_Q) and mean is O(N_Q *n_Q/n_{KV}), ignore reshape bv big-O
\State $S \gets \bar{Q} K^\top$ \Comment{$(b, n_{\mathrm{KV}}, N_Q, T)$} 
% O(N_Q*T*d*n_{KV}) (number of dot products * d work per dot prod)
\State $\hat{S} \gets \max(S, \texttt{dim=2})$ \Comment{$(b, n_{\mathrm{KV}}, T)$} % O(N_Q*T)
\State $I \gets \texttt{topk}(\hat{S}, B_{\text{SA}})$ %O(T)
% \State $K^\star \gets \texttt{gather}(K, I)$
% \State $V^\star \gets \texttt{gather}(V, I)$
\State $K^\star, V^\star \gets \texttt{gather}(K, I), \texttt{gather}(V, I)$
\end{algorithmic}
\end{algorithm}

%% file: contents/method_trimmed.tex
\begin{figure}
     \centering
     \begin{subfigure}[b]{0.33\textwidth}
         \centering
         \includegraphics[width=\textwidth]{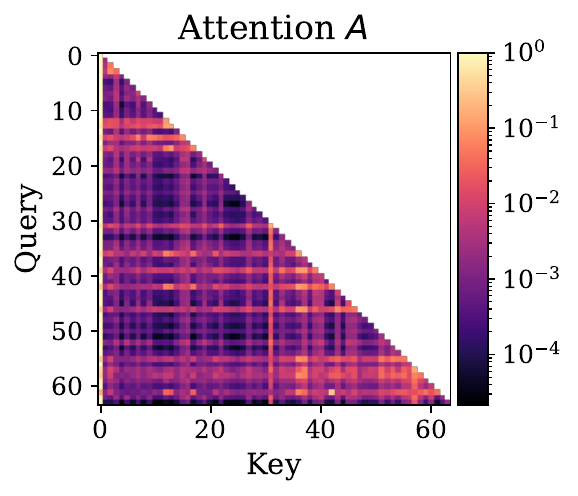}
         \caption{Attention score $A$}
         \label{fig:attention}
     \end{subfigure}
     \hfill     
     \begin{subfigure}[b]{0.30\textwidth}
         \centering
         \includegraphics[width=\textwidth]{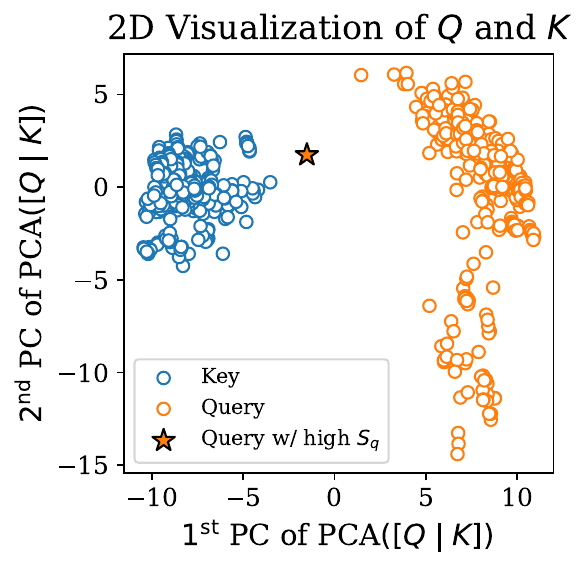}
         \caption{PCA viz. of $Q$ and $K$}
         \label{fig:pca_qk}
     \end{subfigure}
     % \hfill
     \begin{subfigure}[b]{0.30\textwidth}
         \centering
         \includegraphics[width=\textwidth]{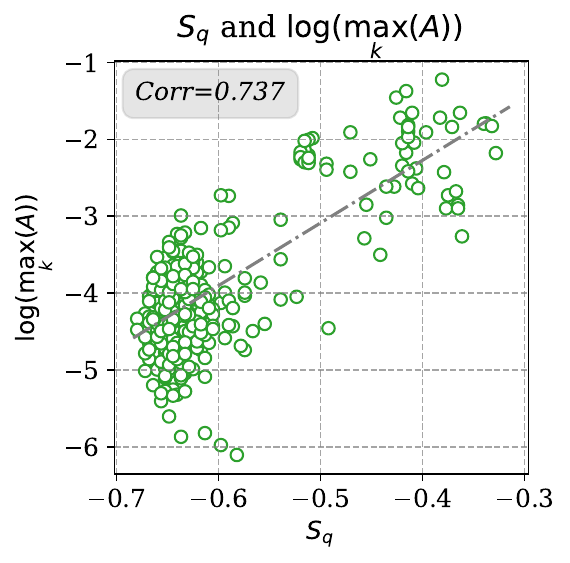}
         \caption{Scatter plot $S_q$ v.s. $\max_k(A)$}
         \label{fig:sq_vs_attn}
     \end{subfigure}     
        \caption{\textbf{Empirical observations from \textit{Llama 3.2-3B-Instruct}, layer 0 head 11.} (a) Attention map $A$. (b) PCA visualization of $Q$ and $K$, showing that queries with higher $S_q$ lie closer to the keys. (c) Correlation between $S_q$ and $\max_k(A)$, indicating stronger key interactions for higher $S_q$ queries.}
        \label{fig:qk_observation}
\vspace{-5mm}
\end{figure}

\section{\ourmethod{} Method}
\label{sec:method}
As discussed in \cref{sec:background}, existing sparse attention methods face limitations in prefill efficiency and portability. \ourmethod{} addresses this by reducing the active KV cache during the chunked prefill process. Specifically, the input sequence $X$ is divided into chunks $\{X_0, X_1, \ldots, X_{N_B-1}\}$, and for each chunk $X_i$, \ourmethod{} selects only the most relevant KVs before computing attention as shown in \cref{fig:chunked_prefill_with_ourmethod}. This is achieved in three stages, as detailed in \cref{alg:ourmethod}. 
\begin{enumerate}[leftmargin=*]
    \item \textbf{Query subselection}: retain only the most informative queries to reduce redundancy.
    \item \textbf{Cosine-Similarity Scoring}: compute cosine similarity between queries and keys to estimate their relevance.
    \item \textbf{Score Aggregation}: combine scores across queries and key-value groups to select the final KV.
\end{enumerate}

The reduced KV set for $X_i$ is then fed into a dense attention kernel such as FlashAttention. By applying this repeatedly, \ourmethod{} reduces prefill cost from $O(T^2)$ to a sub-quadratic complexity.

\subsection{Query Subselection}
\label{subsec:query_subselection}

Prior work \citep{park2025keydiff} examined the distinctive geometric characteristic of keys in LLM. Inspired by this insight, we focus on the geometry of queries. We observe that queries with lower cosine similarity to the mean query tend to align broadly with most keys, while near-mean queries concentrate on a small shared group of keys (\cref{fig:attention}). %For example, the queries indexed between $55$–$60$ consistently show high activation across many keys, whereas those between $45$–$50$ assign high scores to a smaller subset of keys, especially the sink token. 
Thus, utilizing all queries yields redundant information and increases complexity.

To mitigate this, \ourmethod{} retains only those queries that consistently exert significant influence on keys. These queries can be identified using their angular distance from the average query vector $M_Q$. Formally, we rank each query $q$ by $-\mathrm{CosSim}(M_Q, q)$ and retain the top $N_Q$. This preserves the queries most responsible for large key-query dot products while normalizing per query, allowing us to efficiently approximate the post-softmax attention matrix $A$. This can be formalized through the following theorem:
\begin{theorem}\label{theorem:keydiff-key-query}
Consider tokens a fixed query $q_0$ and key $k$, and let the average of a set of queries be denoted $M_Q$. Suppose $\textrm{CosSim}({k}, q_0) = \beta_q > 0$ and $\textrm{CosSim}(M_Q, k) = \alpha_q < 0$. Then 
\begin{equation}
\textrm{CosSim}(M_Q, {q^*}) \leq 1 + \alpha_q \beta_q - 0.5\alpha_q^2 -0.5\beta_q^2.
\label{eq:key-query-eq}
\end{equation}
\end{theorem}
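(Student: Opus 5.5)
The plan is to read $q^*$ in \eqref{eq:key-query-eq} as the fixed query $q_0$, normalize everything to the unit sphere, and get the bound from one application of Cauchy--Schwarz. Since cosine similarity is invariant under positive rescaling, set $u = M_Q/\|M_Q\|$, $v = q_0/\|q_0\|$ and $w = k/\|k\|$. I assume $M_Q$, $q_0$, $k$ are nonzero, which is implicit in the hypotheses because the cosine similarities must be defined. The hypotheses then read $u^\top w = \alpha_q$ and $v^\top w = \beta_q$, and the target quantity is $\mathrm{CosSim}(M_Q,q_0) = u^\top v$.

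The first step is to rewrite the right-hand side of \eqref{eq:key-query-eq} as a perfect square:
\begin{equation*}
1 + \alpha_q\beta_q - \tfrac12\alpha_q^2 - \tfrac12\beta_q^2 \;=\; 1 - \tfrac12(\beta_q-\alpha_q)^2 .
\end{equation*}
For unit vectors we also have $\|u-v\|^2 = 2 - 2u^\top v$. So the claim $u^\top v \le 1 - \tfrac12(\beta_q-\alpha_q)^2$ is equivalent to
\begin{equation*}
\|u-v\|^2 \;\ge\; (\beta_q-\alpha_q)^2 .
\end{equation*}
The inequality is now a lower bound on the chordal distance between the mean-query direction and the query direction.

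The second step proves this lower bound by projecting onto the unit key direction $w$:
\begin{equation*}
\beta_q - \alpha_q = (v-u)^\top w \le \|v-u\|\,\|w\| = \|v-u\| .
\end{equation*}
The sign conditions $\beta_q>0>\alpha_q$ give $\beta_q-\alpha_q>0$. Both sides are therefore nonnegative, and squaring preserves the inequality, which yields the bound above. Reversing the equivalences of the first step then gives \eqref{eq:key-query-eq}.

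There is no real obstacle in this argument; the one step needing care is the algebraic recognition of the right-hand side as $1-\tfrac12(\beta_q-\alpha_q)^2$. After that, the proof is the unit-sphere identity $\|u-v\|^2=2-2u^\top v$ plus Cauchy--Schwarz. The sign assumptions are only used to ensure $\beta_q-\alpha_q>0$, so that squaring is legitimate. They also make the bound informative: as $\beta_q-\alpha_q$ grows toward $2$, $\mathrm{CosSim}(M_Q,q_0)$ is forced toward $-1$. This matches the intuition of \cref{subsec:query_subselection}: a query that aligns with a key the mean query points away from must itself be far from the mean query.
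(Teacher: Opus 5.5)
Your proof is correct, and it takes a different route from the paper's.

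\textbf{The paper's route.} It works in coordinates. It expands $M_Q$ and $q^*$ in an orthonormal basis $\{k/\|k\|, r_1,\dots,r_{n-1}\}$ and writes $\mathrm{CosSim}(M_Q,q^*) = \alpha_q\beta_q + \sum_i \alpha_i\beta_i$. It then bounds each complement term by Young's inequality $|\alpha_i||\beta_i| \le \tfrac12(\alpha_i^2+\beta_i^2)$. Finally it uses the unit-norm constraints $\sum_i\alpha_i^2 = 1-\alpha_q^2$ and $\sum_i\beta_i^2 = 1-\beta_q^2$.

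\textbf{Your route.} You complete the square to get $1-\tfrac12(\beta_q-\alpha_q)^2$. This turns the claim into the chord-length inequality $\|u-v\| \ge |\beta_q-\alpha_q|$, which you get from Cauchy--Schwarz against $w$.

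\textbf{Comparison.} Both proofs discard exactly the same nonnegative quantity, $\tfrac12\|(I-ww^\top)(u-v)\|^2$; in the paper's coordinates this is $\tfrac12\sum_i(\alpha_i-\beta_i)^2$. So the two bounds are identical, and both are attained precisely when $u-v$ is parallel to $w$. Your version is a short, coordinate-free argument. It makes two things visible that the paper's expansion hides:
\begin{itemize}
\item the geometric content: the mean-query and query directions are at chordal distance at least $|\beta_q-\alpha_q|$;
\item the monotonicity of the bound in $\beta_q-\alpha_q$.
\end{itemize}
The paper's version needs no insight beyond expanding in a basis. Your reading of $q^*$ as $q_0$ matches what the paper does implicitly when it sets $\beta$ to be the cosine of $q^*$ with $k$.

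\textbf{One small correction.} The sign hypotheses are not needed to justify squaring. The inequality $|\beta_q-\alpha_q| = |(v-u)^\top w| \le \|v-u\|$ holds for any $\alpha_q,\beta_q$. Neither proof actually uses $\beta_q>0>\alpha_q$; these conditions matter only for interpreting the bound.
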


% The proof follows directly from Theorem~3.2 in \citet{park2025keydiff}.
The proof is provided in \cref{sec:proof}. Intuitively, if a query $q$ attends strongly to $k$, then $\beta_q$ will be large while $\alpha_q$ will be small, resulting in a large subselection score 
\[
S_q = -\textrm{CosSim}(M_Q, q^*).
\]
Hence our subselection retains queries that contribute most to the attention distribution, in line with both empirical observations and the underlying geometry of keys and queries in modern LLMs. To further validate our design choice, we analyze the geometry of queries and keys. As shown in \cref{fig:pca_qk}, most queries are separated from the key cluster in a 2D PCA projection, while queries with higher $S_q$ tend to lie closer to the keys. Complementing this, \cref{fig:sq_vs_attn} demonstrates that higher $S_q$ is correlated with larger $\max_k(A)$ outside of the sink token, indicating that such queries exert stronger influence on individual keys. Together, these observations suggest that $S_q$ identifies queries that are both geometrically aligned with keys and dominant in attention.

\subsection{Scoring via Cosine Similarity}
\label{subsec:scoring}
Given the reduced set of queries, the next step is to evaluate their interactions with keys. Existing
methods often use dot products $QK^{\top}$ but these are scale-dependent and unstable under aggregation.
Instead, \ourmethod{} computes
\[
S = \mathrm{CosSim}(Q, K),
\]
which normalizes vectors to unit length and provides a bounded, geometry-aware proxy for softmax
attention weights. Recent work \citep{mongaras2025cottention,park2025keydiff} also supports the use of cosine similarity as a lightweight normalization that approximates softmax behavior. Empirically, ablations on the RULER benchmark in \cref{table:ruler_ablation_cossim} show that cosine similarity improves subselection quality by more than 10\% compared to the dot product. 

\subsection{Aggregation Across Queries and KV Groups}
\label{subsec:score_aggregation}

\begin{wrapfigure}{r}{0.35\textwidth}
\centering
\vspace{-14mm} 
\includegraphics[width=\linewidth]{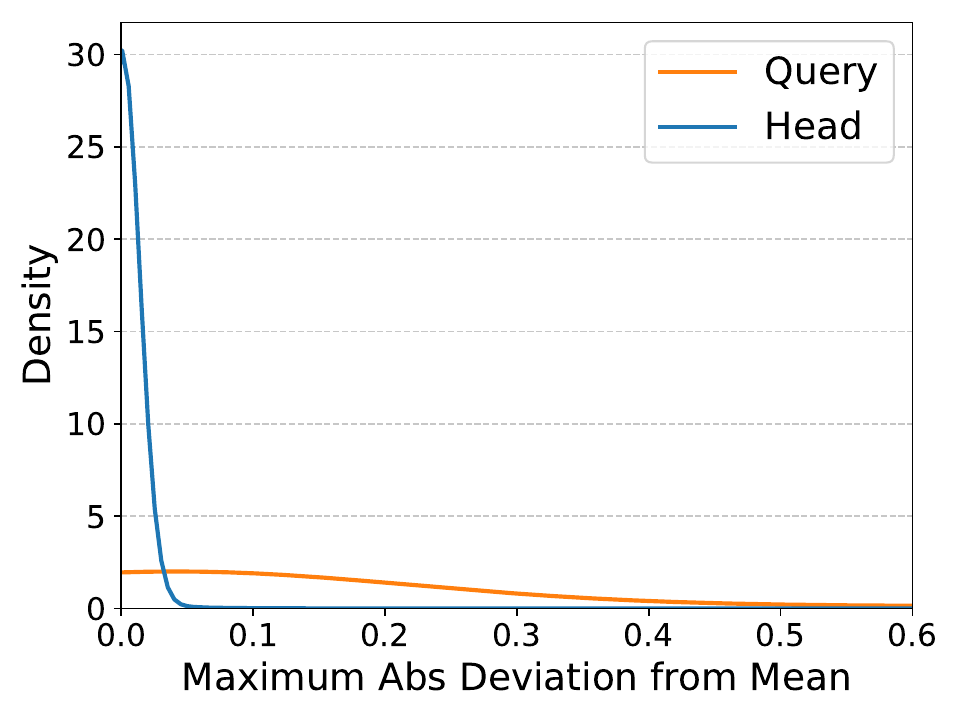}
\caption{Distribution of attention score max deviation from mean along query and head dimension.}
\label{fig:dist_query_head}
\end{wrapfigure}

Aggregation of scores is required along two axes: across queries and across grouped-query attention (GQA) heads. For the query axis, averaging can obscure rare but important query–key interactions, so \ourmethod{} instead takes the \textbf{maximum}, which preserves such outliers. This is supported by the heavy-tailed distribution in \cref{fig:dist_query_head} and by gains on the RULER benchmark (\cref{table:ruler_ablation}). For the GQA axis, we simply average scores across KV groups. Unlike queries, head-level importance is correlated \citep{bhojanapalli2021leveraging}, making the \textbf{mean} accurate and stable. Notably, if we normalize $K$ and $Q$ prior to computing the score, we can achieve the same average by \textit{pre-aggregation}: averaging normalized queries across KV groups due to the linearity of the mean and the outer product $QK^T$. Pre-aggregation also lowers computation and memory cost by a factor of the number of KV groups (which is large in most modern models), enabling compatibility with modern architectures and enhancing efficiency.

\subsection{Chunked Prefill with \ourmethod{}}
The three components described above—query subselection, scoring, and aggregation—are integrated
into the overall \emph{chunked prefill} process. For each incoming block $X_i$, \ourmethod{}
subselects the active KV tokens using the procedure in \cref{alg:ourmethod}. The resulting subset of keys and values is then passed to the attention computation for that chunk. By reducing the KV budget at each chunk, \ourmethod{} shrinks both compute and memory transfer costs. This enables significant prefill acceleration while maintaining accuracy across long-context benchmarks and is summarized in Algorithm~\ref{alg:geoselect_chunked_prefill}.

%% file: contents/results.tex
\section{results}

\vspace{-2mm}
\begin{figure*}[t]
     \centering
     \begin{subfigure}[b]{0.32\linewidth}
         \centering
         \includegraphics[width=\linewidth]{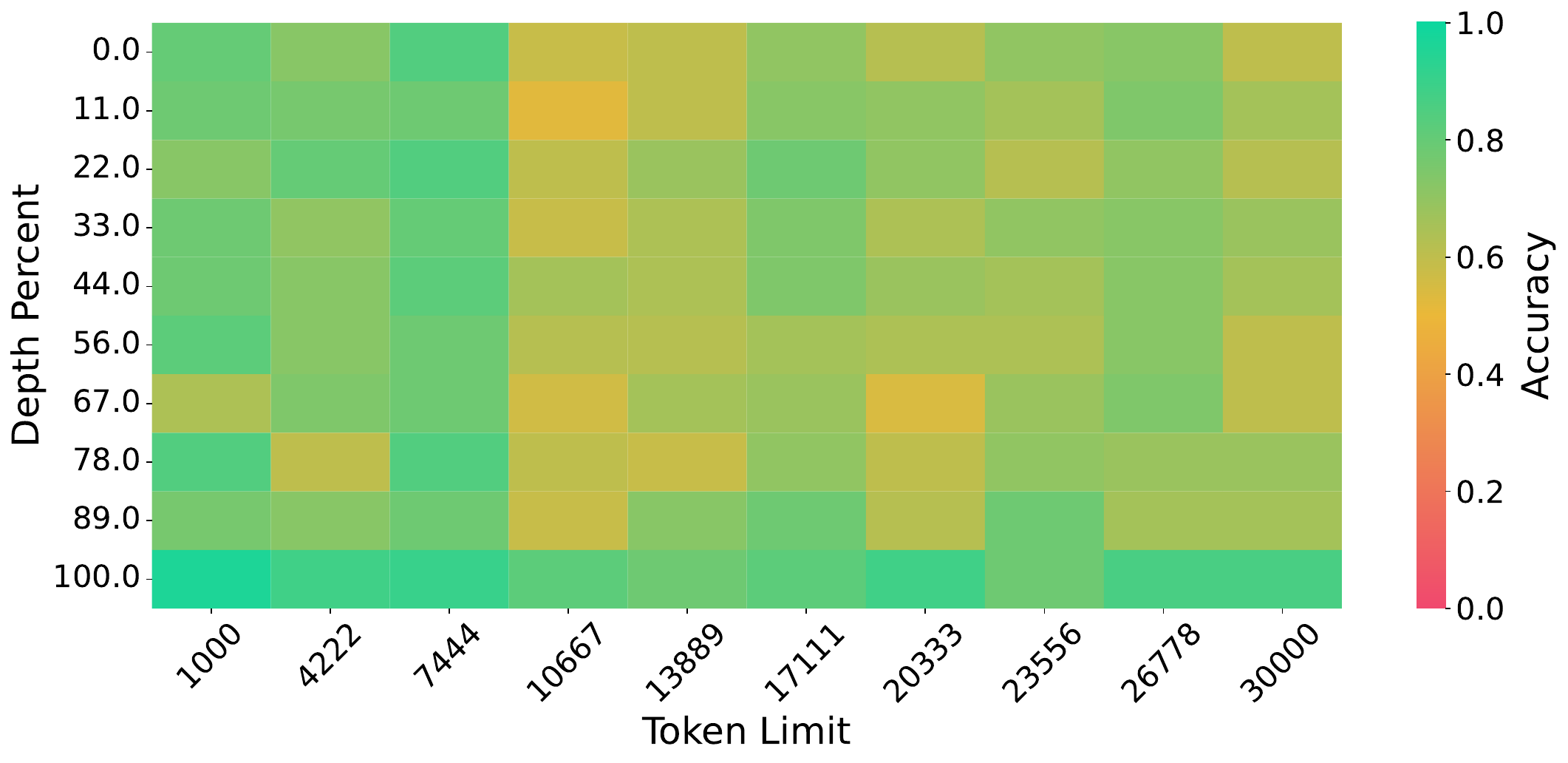}
         \caption{\ourmethod{}}
     \end{subfigure}
     \hfill
     \begin{subfigure}[b]{0.32\linewidth}
         \centering
         \includegraphics[width=\linewidth]{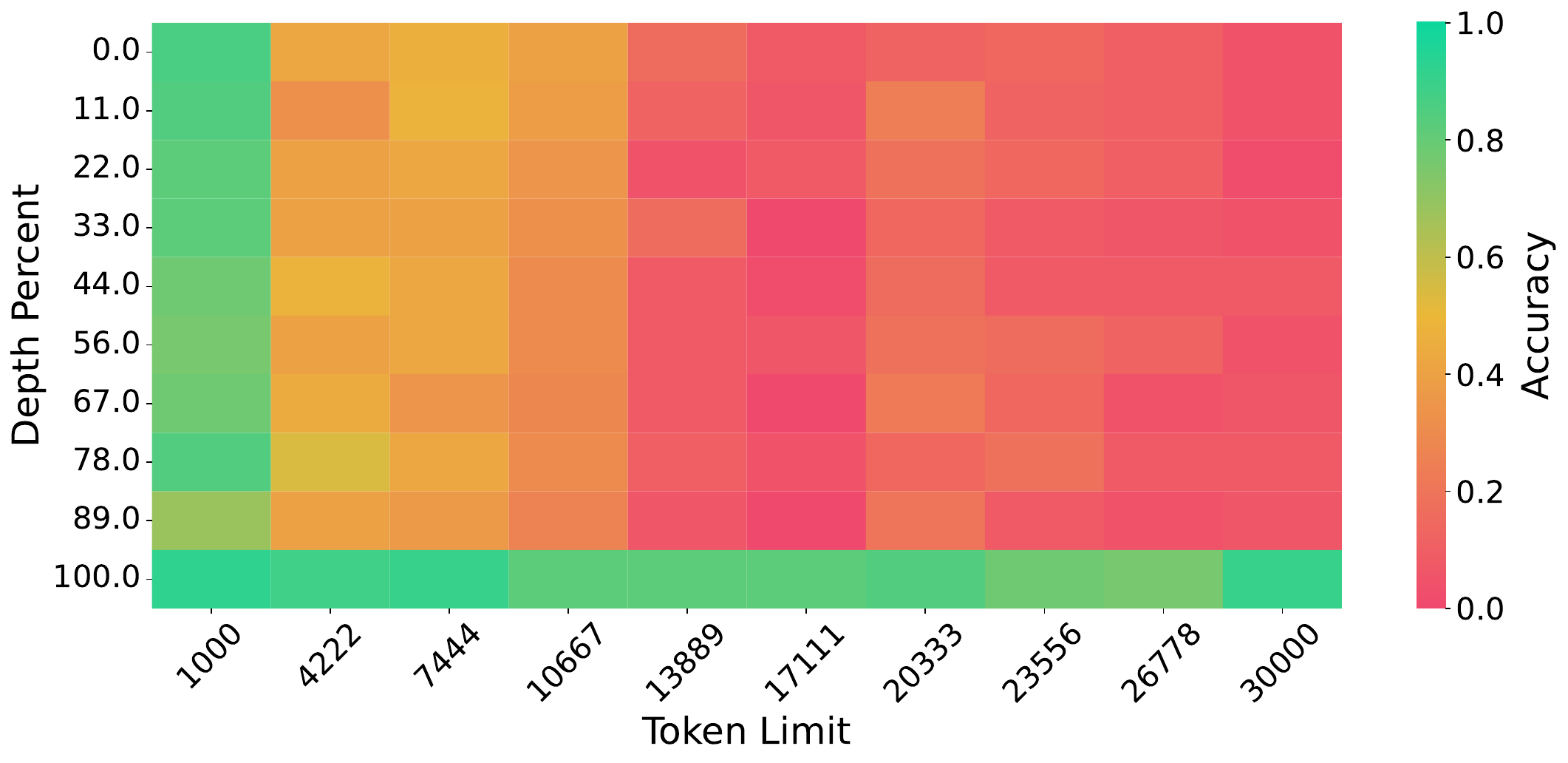}
         \caption{SampleAttention}
     \end{subfigure}
        \hfill
    \begin{subfigure}[b]{0.32\linewidth}
         \centering
         \includegraphics[width=\linewidth]{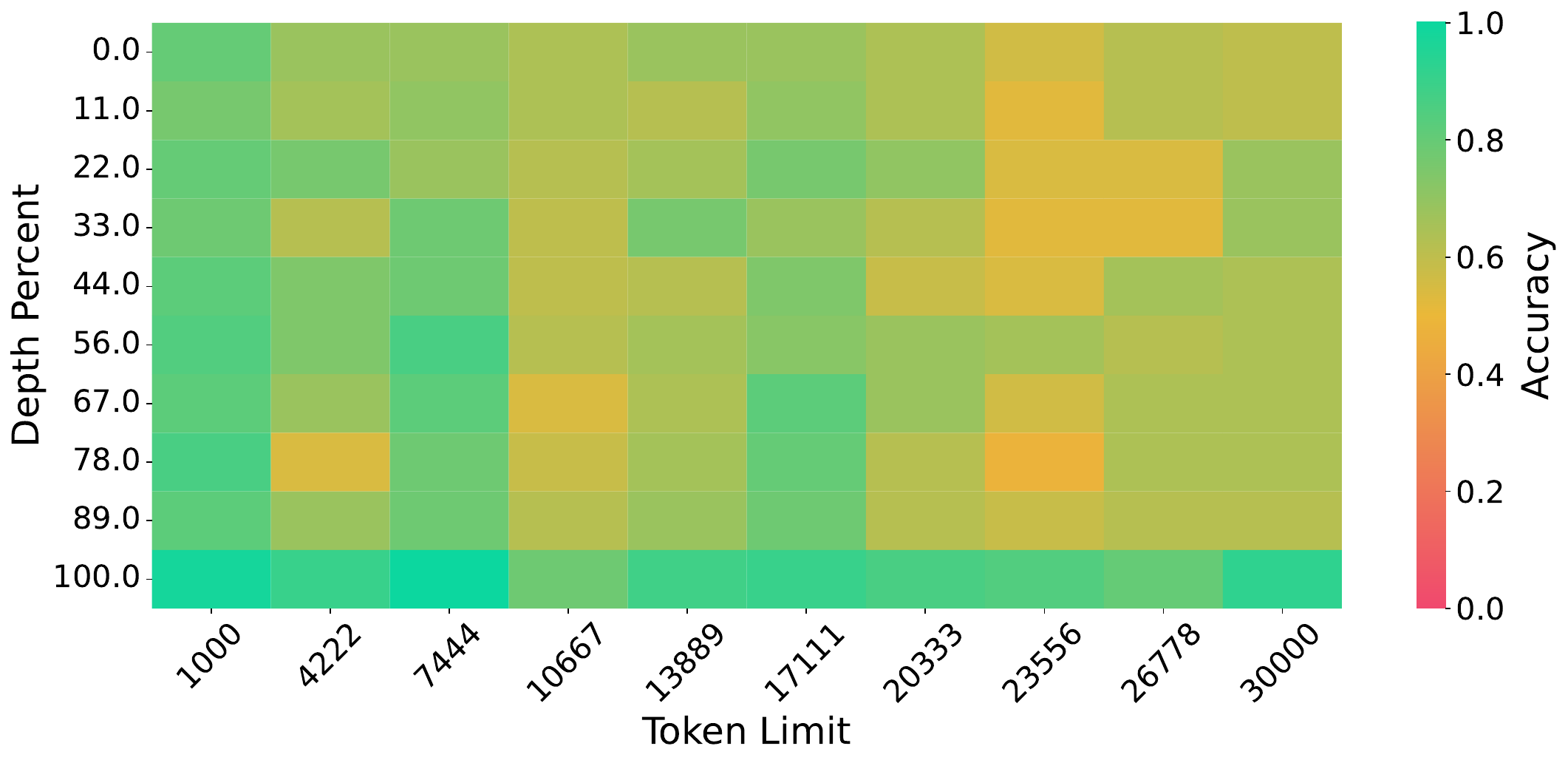}
         \caption{Full}
     \end{subfigure}
      \hfill
     \caption{Accuracy across document length and needle depth for NIAH with $B_\text{SA}=2048$ and $B_{\text{CP}}=128$. Results for additional sparse attention methods are shown in \cref{fig:needle_haystack_extended}.} 
    \label{fig:needle_haystack}
    % \vspace{-3mm}
\end{figure*}

In this section, we empirically demonstrate the effectiveness of \ourmethod{}. 
We begin with a detailed description of the evaluation setup and sparse attention baselines, then present our results.

{\bf{Sparse Attention Baselines:      }} We compared a group of sparse attention algorithms on a set of models across a variety of selective budgets. The competing algorithms include \textsc{SampleAttention} \citep{zhu2024sampleattention} which uniformly samples queries to compute $S$, \textsc{LessIsMore} \citep{yang2025moretrainingfreesparseattention} which computes $S$ only at specified layers, \textsc{SparQ} \citep{ribar2024sparq} which subselects along channel dimension before computing $S$, and \textsc{Loki} \citep{singhania2024loki} which down-projects keys and queries before computing $S$. Unless otherwise specified, \ourmethod{} and \textsc{SampleAttention} subselect $16$ queries at a time while \textsc{SparQ} and \textsc{Loki} down project keys and queries to $64$ dimensions.

{\bf{Models and Evaluation Setup:      }}
To evaluate the efficacy of \ourmethod{} across diverse model structures, we adopted \ourmethod{} and baseline methods across a variety of model families, including simple attention-only models, MoE-based FFN variants, and NoPE adaptations.  The models considered include Llama $3.2$-$3$B-Instruct \citep{dubey2024llama}, Qwen $2.5$-$3$B-Instruct \citep{yang2024qwen2technicalreport}, Smollm$3$ \citep{bakouch2025SmolLM3Smol}, Qwen$3$-$4$B, Qwen$3$-$30$B-A$3$B \citep{yang2025qwen3}, and GPT-OSS-$20$B \citep{agarwal2025gpt}. Additionally, we simulated a resource constrained scenario by utilizing chunked prefill with a block size $B_{\text{CP}}=128$. Unless otherwise specified, experiments were performed utilizing an Nvidia A$100$ GPU.

{\bf{Evaluation Summary:      }} 
We show that \ourmethod{} successfully preserves downstream accuracy on benchmarks involving long input prompts, extends naturally to generation-intensive tasks such as mathematical reasoning, and provides both attention module-level and end-to-end speedup. Results are summarized below:
\begin{itemize}[leftmargin=*, itemsep=.02em, topsep=0.5em]    
    \item \textbf{Needle-In-a-Haystack.} \ourmethod{} significantly outperforms competing selective attention methods on the needle-in-a-haystack (NIAH) benchmark (\cref{subsec:niah}).
    \item \textbf{RULER.} \ourmethod{} significantly outperforms competing methods on the RULER benchmark \citet{hsieh2024ruler}, achieving consistently better performance across all prompt lengths and models (\cref{subsec:ruler}).
    \item \textbf{LongBench.} \ourmethod{} exceeds the performance of competing baselines on the LongBench suite \citet{bai-etal-2024-longbench}, demonstrating substantially less performance degradation with decreasing selective budgets. Across all models and budgets, chunked prefill with \ourmethod{} outperforms other methods by at least $10$\textendash{}$20\%$ (\cref{subsec:longbench}).
    \item \textbf{Math500.} Although our primary focus is prefill, \ourmethod{} is also directly applicable to generation. On the Math500 benchmark, \ourmethod{} outperforms a sparse attention method specialized for generation and in some cases surpasses the accuracy of dense attention (\cref{subsec:math500}).
    \item \textbf{Ablation.} In sweeping over combinations of $B_{\text{CP}}$, $B_{\text{SA}}$, and $N_Q$ that govern the efficiency--accuracy trade-off in \ourmethod{}, we observe that accuracy decreases only gradually as sparsity increases.  This indicates that \ourmethod{} can be tuned for diverse hardware environments while maintaining high efficiency. (\cref{subsec:ablation})
    \item \textbf{Latency.} We measured time-to-first-token (TTFT) across increasing prompt lengths for end-to-end inference, along with standalone attention module latency. \ourmethod{} achieves a $5\times$ module-level speedup on $30$k tokens and a $3\times$ TTFT improvement on $50$k tokens relative to the base model with full attention, while scaling more efficiently than competing sparse methods (\cref{subsec:timing}). We provide additional runtime and memory complexity analysis for \ourmethod{} and the baselines in \cref{appendix:runtime-and-memory-complexity}.
\end{itemize}

% \vspace{-1mm}
\subsection{Needle In a Haystack}
\label{subsec:niah}
% \vspace{-1mm}

NIAH is a synthetic benchmark designed to evaluate how well large language models retrieve specific information from long contexts \citep{kamradt2023needle,liu2024lost}. A single \textit{needle} sentence containing a key fact is inserted into a large body of irrelevant text (i.e., \textit{haystack}), and the model is asked a question that can only be answered by finding that needle. We test Llama3.2-3B-Instruct with selective budget $B_\text{SA} = 2048$ and sequence lengths up to $30$K. 
\cref{fig:needle_haystack} shows that most selective attention methods incur significant degradation during chunked prefill. However, \ourmethod{} retains the ability to retrieve important information across various input sequence lengths.

% \vspace{-1mm}
\subsection{RULER Benchmark}

\input{tables/ruler_sa1024_column_major}
\label{subsec:ruler}
% \vspace{-1mm}
RULER \citep{hsieh2024ruler} is a synthetic test for evaluating long-context capabilities of LLMs using NIAH and QA tasks. It addresses shortcomings of NIAH by extending beyond basic retrieval to include multiple needles and new categories such as multi-hop tracing and aggregation, which require reasoning across dispersed information rather than locating a single token.

\input{tables/ruler_one_fourth}

We ran experiments to evaluate \ourmethod{} against sparse attention methods and the full attention baseline. \cref{tab:ruler_sa1024} reports results with $B_{\text{SA}} = 1024$, showing $10$\textendash{}$20\%$ higher scores than baselines. We also simulated $B_{\text{SA}}$ growing with the KV cache to maintain a constant compression ratio: during chunked prefill and token generation, $B_{\text{SA}}$ is set to $25\%$ of  full cache length. \cref{table:ruler_one_fourth} shows that accuracy loss remains very limited even at long sequences.

\subsection{LongBench}
\label{subsec:longbench}
LongBench \citep{bai-etal-2024-longbench} is a multi-task benchmark designed to evaluate the ability of LLMs in handling long sequences. It includes real-world tasks with inputs such as books, technical reports, and multi-document collections. The benchmark features substantial input lengths average 12K tokens and maximum exceeding 60K tokens measured with the Llama tokenizer, making it a rigorous test for models that aim to process extended inputs effectively.

\cref{table:longbench_compare} reports the normalized accuracy of sparse attention methods across various models and values of $B_\text{SA}$. Each cell shows relative scores compared to the dense baseline (where 1.0 indicates no accuracy drop). Across models, we observe that \ourmethod{} maintains minimal accuracy degradation even with a small budget (e.g., $B_\text{SA} = 1{,}024$). Moreover, \ourmethod{} consistently outperforms competing methods across both models and budget settings.

\input{tables/longbench_column_major}

% \vspace{-1mm}
\subsection{Math 500 Benchmark}
\label{subsec:math500}
% \vspace{-1mm}
Although our primary focus is prefill optimization, \ourmethod{} is also directly applicable to the generation phase, where the model computes attention for a single query and no query subselection is applied. To demonstrate its efficacy in this setting, we evaluated \ourmethod{} on a generation-intensive reasoning task, applying chunked prefill whenever applicable and using sparse attention for the computation. As shown in \cref{table:math500}, \ourmethod{} outperforms a sparse attention method specifically designed for generation, and in some cases even surpasses the accuracy of dense attention.
% \vspace{-1mm}
\subsection{Ablation study}
\label{subsec:ablation}
% \vspace{-1mm}
To examine the efficiency–accuracy trade-off of \ourmethod{}, we performed ablations over the key $B_{\text{SA}}$, $B_\text{CP}$ and $N_q$ parameters . \cref{table:longbench_compare,table:ruler_ourmethod_b_sa,table:longbench_compare_appendix} for both LongBench and Ruler varying $B_{\text{SA}}$ demonstrate that across models and benchmarks,  accuracy decreases very gradually as sparsity increases while latency and memory usage improve substantially. We achieve less than a $3\%$ drop in performance with less than $12\%$ of the original tokens used for attention. As shown in \cref{table:longbench_ablation_bpp}, \ourmethod{} maintains stable performance under varying $B_\text{CP}$. In \cref{table:longbench_ablation_nq} we observe that even with a small setting of $N_q = \tfrac{1}{16} B_\text{CP}$,  accuracy drops by only \textasciitilde $3\%$ compared to full attention. This robustness allows practitioners to tune \ourmethod{} for diverse hardware constraints with minimal quality loss.
% \vspace{-1mm}
\subsection{Attention Latency and TTFT}
\label{subsec:timing}
% \vspace{-1mm}
To evaluate the efficiency of \ourmethod{}, we measured latency in two settings: standalone attention modules and time-to-first-token (TTFT) on the Qwen$3$-$4$B model. All experiments use the HuggingFace implementation with \texttt{bfloat16} precision and FlashAttention for the dense baseline. Each data point is averaged over $100$ trials, and hyperparameters follow original publications. To compare \ourmethod{} with existing sparse attention methods, we measured performance across increasing input sequence lengths. All reported times are expressed as speedups relative to the full attention module, and in the end-to-end case, the base Qwen$3$-$4$B model with chunked prefill ($B_{\text{CP}}=128$). \cref{fig:standalone_timing,fig:end_to_end_timing} show \ourmethod{} scales substantially better than the dense baseline and consistently outperforms or matches the strongest sparse attention competitors on Nvidia A100. \ourmethod{} also achieves significant speedups on consumer-grade hardware: \cref{fig:standalone_timing_cpu,fig:standalone_timing_small_gpu} show $5$–$6\times$ gains at long context lengths on both an Intel Xeon W-2125 CPU and an Nvidia RTX 2080 GPU, and in decoding (\cref{fig:timing_plots_decode_math_500}).

\begin{figure*}
    \centering
    \begin{subfigure}[b]{0.45\textwidth}
        \centering
        \includegraphics[width=\textwidth]{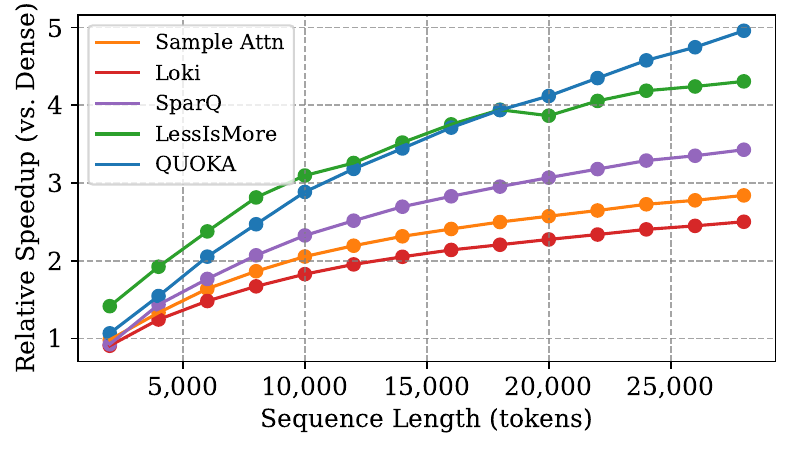}
        \caption{Attention latency on NVIDIA A100 GPU}
        \label{fig:standalone_timing}
    \end{subfigure}
    \hfill
    \begin{subfigure}[b]{0.475\textwidth}  
        \centering 
        \includegraphics[width=\textwidth]{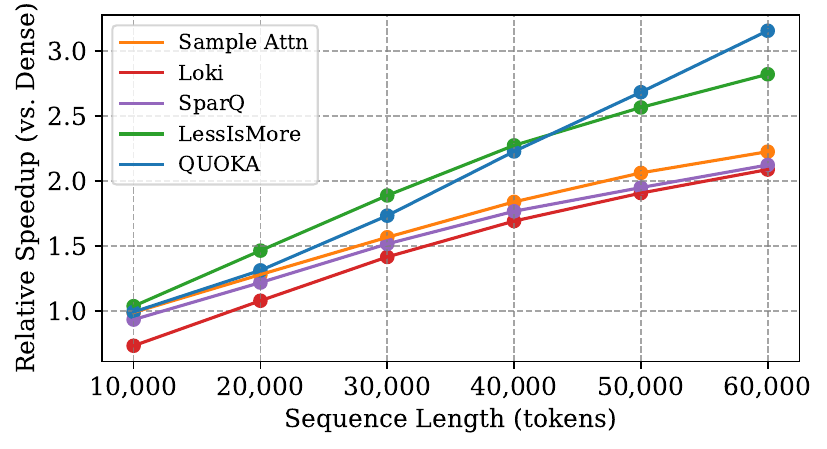}
        \caption{TTFT on NVIDIA A100 GPU}
        \label{fig:end_to_end_timing}
    \end{subfigure}
    % \vskip\baselineskip
    \begin{subfigure}[b]{0.475\textwidth}   
        \centering 
        \includegraphics[width=\textwidth]{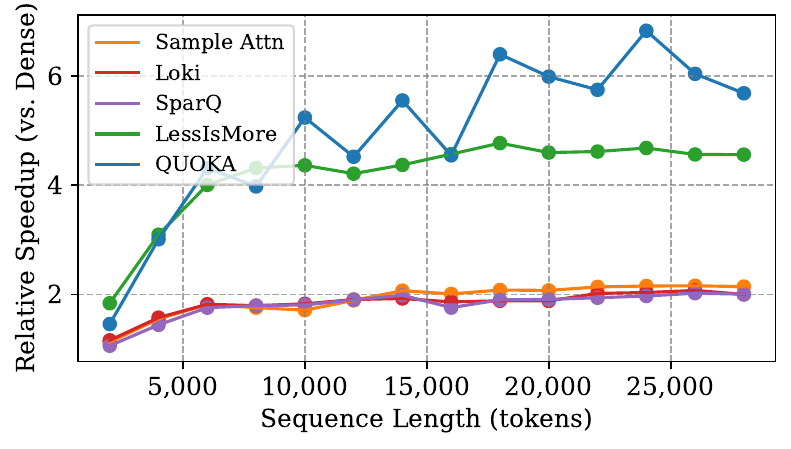}
        \caption{Attention latency on Intel Xeon W-2125 CPU}
        \label{fig:standalone_timing_cpu}
    \end{subfigure}
    \hfill
    \begin{subfigure}[b]{0.475\textwidth}   
        \centering 
        \includegraphics[width=\textwidth]{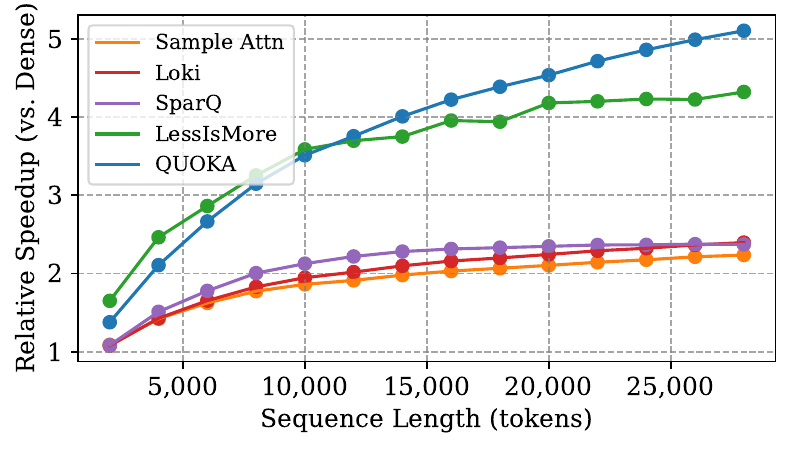}
        \caption{Attention latency on NVIDIA RTX 2080 GPU}
        \label{fig:standalone_timing_small_gpu}
    \end{subfigure}
    \caption{Relative speedup of attention and TTFT compared to dense attention baseline using $B_\text{CP}=128$ on different hardware.}
    \label{fig:timing_plots}
    
\end{figure*}

%% file: tables/ruler_sa1024_column_major.tex
\begin{table}[t]
\caption{{\textbf{RULER evaluation results across increasing lengths with $B_{\textrm{SA}}=1,024$}.} Highest per column in \textbf{bold} (Higher is better). Attention sparsification applied on full attention layers.}
\label{tab:ruler_sa1024}
% \footnotesize
\resizebox{\textwidth}{!}{%
\begin{tabular}{lcccccccccccccccccccc}
\toprule
& \multicolumn{4}{c}{\textit{Llama-3.2-3B-Instruct}} & \multicolumn{4}{c}{\textit{Qwen-2.5-3B-Instruct}} & \multicolumn{4}{c}{\textit{Qwen-3-4B}} & \multicolumn{4}{c}{\textit{Smollm3}} & \multicolumn{4}{c}{\textit{GPT-OSS-20B}}\\
\cmidrule(lr){2-5} \cmidrule(lr){6-9} \cmidrule(lr){10-13} \cmidrule(lr){14-17}  \cmidrule(lr){18-21}
Length & 4k & 8k & 16k & 32k & 4k & 8k & 16k & 32k & 4k & 8k & 16k & 32k & 4k & 8k & 16k & 32k & 4k & 8k & 16k & 32k \\
\midrule
SnapKV       & 29.15 & 13.70 & 12.44 &  6.21 & 19.25 & 12.40 &  9.88 &  8.13 & 27.29 & 17.15 & 10.94 & 10.07 & 43.72 & 35.98 & 24.21 & 12.23 & 58.18 & 31.45 & 24.82 & 16.63\\
KeyDiff      & 53.34 & 31.10 & 24.29 & 14.87 & 34.09 & 20.68 & 15.79 & 10.32 & 37.29 & 27.15 & 20.94 & 12.07 & 62.85 & 51.60 & 41.64 & 30.37& 69.58 & 28.76 & 15.86 & 9.65 \\
\midrule
LessIsMore   & 75.15 & 49.23 & 30.44 & 19.16 & 36.66 & 20.21 & 12.84 & 10.12 & 65.55 & 42.75 & 24.39 & 14.87 & 79.67 & 50.17 & 35.12 & 24.21 & 67.35 & 54.49 & 38.27 & 20.11\\
Loki         & 74.84 & 56.50 & 25.76 &  8.05 & 74.40 & 60.09 & 48.96 & 34.12 & 82.83 & 65.19 & 52.29 & 39.31 & 84.52 & 64.20 & 50.10 & 22.66 & 75.48 & 65.36 & 54.67 & 39.92\\
SparQ        & 79.36 & 60.80 & 48.59 & 31.14 & 78.07 & 59.87 & 54.71 & 36.74 & 87.93 & 68.97 & 56.02 & 35.20 & 82.45 & 57.62 & 32.18 & 18.69 & 70.07 & 54.00 & 30.75 & 15.20\\

SampleAttn   & 78.25 & 61.14 & 48.31 & 31.73 & 77.17 & 60.88 & 56.64 & 36.17 & 87.84 & 72.46 & 59.57 & 40.72 & 85.72 & 66.44 & 59.10 & 45.98 & 76.20& 70.35 & 53.91 & 30.42\\
\midrule
\ourmethod{} & \textbf{86.71} & \textbf{80.19} & \textbf{70.90} & \textbf{57.01} &
\textbf{87.85} & \textbf{74.27} & \textbf{66.82} & \textbf{59.37} &
\textbf{93.73} & \textbf{91.07} & \textbf{88.57} & \textbf{74.83} &
\textbf{89.97} & \textbf{79.94} & \textbf{72.69} & \textbf{61.37} & \textbf{78.92} & \textbf{79.19} & \textbf{73.40} & \textbf{57.79}\\
\bottomrule
\end{tabular}
}
\vspace{-5mm}
\end{table}

%% file: tables/ruler_one_fourth.tex
\begin{wraptable}{r}{0.45\linewidth}
\vspace{-3mm}
\centering
% \caption{{\bf{\ourmethod{} Ruler Performance}} \newline {\footnotesize{Results of the RULER benchmark at different test lengths for a set of models with full cache, and $1/4$ of the cache active in each attention module.}}}
\caption{\textbf{RULER eval of \ourmethod{}} with $B_\text{SA}$ set to 25\% of the KV cache length}
\label{table:ruler_one_fourth}
\resizebox{\linewidth}{!}{
\begin{tabular}{lc c ccc}
\toprule

\multirow{2.5}{*}{Model} & \multirow{2.5}{*}{Budget} & \multicolumn{4}{c}{Prompt Length} \\
\cmidrule(lr){3-6} 
& &
{4096} & {8192} & 
{16384} & {32768}
\\
\midrule
\multirow{2}{*}{\textit{Llama3.2-3B}} & Full & 87.50 & 81.33 & 78.98& 76.31\\
& $25\%$ & 86.94 & 79.72 & 76.02 & 74.14\\
\midrule
\multirow{2}{*}{\textit{Qwen2.5-3B}} & Full & 89.56 & 81.99 & 76.09 & 71.69\\
& $25\%$ & 86.07 & 79.78 & 74.25 & 68.84\\
\midrule
\multirow{2}{*}{\textit{Qwen3-4B}} & Full & 93.32 & 91.68 & 91.18& 88.54\\
& $25\%$ & 92.50 & 91.35 & 90.63 & 87.87\\
\midrule

% \multirow{2}{*}{\textit{Qwen3-30B-A3B-Inst}}
\multirow{2}{*}{\shortstack{\textit{Qwen3-30B}\\\textit{A3B-Instruct}}}
& Full & 94.08 & 93.75 & 92.02& 91.87\\
& $25\%$ & 93.25 & 92.77 & 91.90 & 91.08\\
\midrule
\multirow{2}{*}{\textit{Smollm3}} & Full & 91.12 & 83.46& 80.11& 75.18\\
& $25\%$ & 89.60 & 81.45 & 78.72 & 73.55\\
\midrule
\multirow{2}{*}{\textit{GPT-OSS-20B}} & Full & 79.35 & 79.32 & 79.78& 75.47\\ 
& $25\%$ & 77.40 & 80.66 & 77.17 & 73.45\\
\bottomrule
\end{tabular}
}
\vspace{-7mm}
\end{wraptable}

%% file: tables/longbench_column_major.tex
\begin{table*}[!t]
\centering
\caption{\textbf{LongBench results (Higher is better)}. For each model, the three columns correspond to selective budgets $B_{\text{SA}} \in \{512, 1024, 2048\}$.}
\label{table:longbench_compare}
\resizebox{\textwidth}{!}{
\begin{tabular}{l ccc ccc ccc ccc}
\toprule
& \multicolumn{3}{c}{\textit{Llama3.2-3B-Instruct}} 
& \multicolumn{3}{c}{\textit{Qwen2.5-3B-Instruct}}
& \multicolumn{3}{c}{\textit{Qwen3-4B}}
& \multicolumn{3}{c}{\textit{Smollm3}} \\
\cmidrule(lr){2-4} \cmidrule(lr){5-7} \cmidrule(lr){8-10} \cmidrule(lr){11-13}

Budget & 512 & 1024 & 2048 & 512 & 1024 & 2048 & 512 & 1024 & 2048 & 512 & 1024 & 2048 \\
\midrule
LessIsMore     & 0.703 & 0.788 & 0.850 
               & 0.461 & 0.556 & 0.659 
               & 0.665 & 0.773 & 0.868 
               & 0.765 & 0.842 & 0.918 \\
SparQ          & 0.721 & 0.802 & 0.842 
               & 0.636 & 0.726 & 0.808 
               & 0.686 & 0.782 & 0.872 
               & 0.725 & 0.805 & 0.922 \\
Loki           & 0.686 & 0.757 & 0.842 
               & 0.589 & 0.671 & 0.787 
               & 0.622 & 0.782 & 0.872 
               & 0.384 & 0.801 & 0.622 \\
SampleAttn     & 0.738 & 0.800 & 0.901 
               & 0.660 & 0.756 & 0.828 
               & 0.755 & 0.875 & 0.947 
               & 0.856 & 0.929 & 0.966 \\
\midrule
\ourmethod{}   & \textbf{0.945} & \textbf{0.972} & \textbf{0.986} 
& \textbf{0.869} & \textbf{0.945} & \textbf{0.977} 
& \textbf{0.966} & \textbf{0.992}  & \textbf{0.995} 
& \textbf{0.998} & \textbf{1.03}  & \textbf{1.028} \\
\bottomrule
\end{tabular}
}
\vspace{-2mm}
\end{table*}

%% file: contents/related_work.tex
\section{Related Work}

As discussed in \cref{sec:background}, the computational bottleneck in prefill stems from the quadratic growth of attention with the size of the KV cache. Under chunked prefill, this cost can be mitigated by subselecting the KV pairs used in the attention. In this context, we review three related lines of work.

{\bf{Dynamic query-dependent sparse attention:}}
Query-dependent sparse attention methods select a subset of cached keys for each query using lightweight proxies for attention scores \citep{singhania2024loki,tang2024quest,yang2024tidaldecode,ribar2024sparq}. While effective for single-query decode, naïvely averaging such proxies across multiple queries in a prefill chunk often degrades accuracy because these methods are tuned for generation-time settings. SampleAttention \citep{zhu2024sampleattention} targets prefill but treats multiple queries homogeneously. Since queries can exhibit distinct geometry with respective to keys, such a geometry-aware query and key subselection is desirable. \ourmethod{} first selects representative queries via cosine dissimilarity and then selects keys via cosine similarity, enabling higher sparsity at comparable accuracy.

{\bf{KV cache eviction:}}
KV cache eviction removes low-salience KV pairs to reduce memory footprint based on pre-defined patterns, attention scores, or cosine similarity of the keys \citep{xiaoefficient,zhang2024h2o,li2024snapkv,oren2024transformers,park2025keydiff}. Like query-dependent sparsification, most eviction policies are designed for single-query generation; a few works aggregate importance across multiple queries \citep{li2024snapkv,kim2024infinipot}, but typically treat queries homogeneously and remain generation-centric. Eviction is complementary to our approach and could be integrated with \ourmethod{} in chunked prefill; we leave this to future work.

{\bf{Kernel-level sparse attention:}}
Another line of work accelerates prefill through kernel-level sparse attention, typically relying on predefined patterns such as block, band, or strided sparsity \citep{child2019generating,dao2022flashattention,xu2025xattention,jiang2024minference,lai2025flexprefill}. While effective with specialized CUDA implementations, these methods require hardware-specific kernel optimizations and often incur additional overhead. In particular, under chunked prefill the repeated kernel invocations and memory transfers substantially reduce their efficiency. By contrast, \ourmethod{} is fully compatible with standard dense kernels and avoids such hardware and runtime dependencies.

%% file: contents/conclusion.tex
\section{Conclusion}
We introduced \ourmethod{}, a training-free and hardware-agnostic sparse attention mechanism that reduces LLM inference latency. Leveraging the geometry of queries and keys, \ourmethod{} identifies and retains the most influential queries to score and subselect the KV cache for attention without sacrificing accuracy. Across LongBench and RULER, \ourmethod{} achieves near-dense performance with a fraction of the budget while outperforming competing sparse attention methods. On Math500 \ourmethod{} further demonstrates versatility by surpassing generation-specific methods and even dense attention in some cases. In addition, \ourmethod{} provides substantial efficiency gains, including up to $5\times$ standalone attention speedup and $3\times$ TTFT reduction across different devices, suggesting that selective attention is a promising direction for TTFT and TPS optimization. An avenue for future improvement is to make the computation of $\bar{Q}K^\top$ more efficient, for example by exploiting query/key channel sparsity or learned low-dimensional projections.

%% file: contents/ethics_and_reproducibility.tex
\section{Ethics Statement}

This work presents \ourmethod{}, a hardware-agnostic sparse attention algorithm designed to accelerate transformer inference. Our research does not involve human subjects, sensitive personal data, or datasets that raise privacy or security concerns. All datasets used (Needle-In-A-Haystack, LongBench, RULER, and Math500) are publicly available and widely used in the research community. To our knowledge, \ourmethod{} does not introduce methodologies or applications that pose foreseeable risks of misuse or harm. We have no conflicts of interest or sponsorship to declare. We believe this work adheres to the code of ethics and does not raise any ethical concerns.

\section{Reproducibility Statement}

We have taken care to ensure that \ourmethod{} is readily reproducible. The main paper includes a detailed description of the algorithm, including its design principles, KV scoring/subselection strategy, and implementation using standard linear algebra routines. We provide comprehensive experimental setups, including datasets used, evaluation metrics, and hardware specifications. All hyperparameters and test configurations are documented in the main text and appendix. While we have not included source code, the simplicity of \ourmethod{}’s training-free design makes it straightforward to implement using widely available libraries.

%% file: contents/appendix.tex
\renewcommand \thepart{}
\renewcommand \partname{}

\newpage

\rule[0pt]{\columnwidth}{3pt}
\begin{center}
    \huge{\ourmethod{} \\
    Supplementary Material}
\end{center}
\vspace*{3mm}
\rule[0pt]{\columnwidth}{1pt}%\hline
\vspace*{-.5in}

% Making Table of contents for appendix only.
% https://tex.stackexchange.com/questions/419249/table-of-contents-only-for-the-appendix
\appendix
\addcontentsline{toc}{section}{}
\part{}
%\parttoc

% Appendix trick from
% https://ckadapa.wordpress.com/2019/09/24/formatting-equations-in-appendix-in-latex/
\renewcommand{\theequation}{A.\arabic{equation}}
% reset the counter
\setcounter{equation}{0}

% \appendix
% \section{Appendix}

\input{contents/appendix_chunked_prefill}
\input{contents/appendix_timing}

\input{contents/appendix_complexity}
\input{contents/appendix_proofs}

\input{contents/appendix_niah}
\input{contents/appendix_ruler}

\input{contents/appendix_longbench}

\input{contents/appendix_math_500}

\input{contents/appendix_ablations}

%% file: contents/appendix_chunked_prefill.tex
\section{Chunked Prefill with \ourmethod{}}

We provide the full algorithm of \ourmethod{} used in chunked prefill. \cref{alg:geoselect_chunked_prefill} summarizes the optimized prefill processes with \ourmethod{} under chunked prefill.

\input{algorithms/ourmethod_chunked_prefill}

%% file: algorithms/ourmethod_chunked_prefill.tex
\begin{algorithm}[h]
\caption{Chunked Prefill with \ourmethod{}}
\label{alg:geoselect_chunked_prefill}
\begin{algorithmic}[1]
\Require Input sequence $X = [X_0, X_1, \ldots, X_{B-1}]$, chunk size $L$, KV budget $B_{\text{SA}}$
\State Initialize output list $\mathcal{Y} \gets [\;]$
\For{each chunk $X_i$}
    \State Compute queries $Q_i$, keys $K_i$, values $V_i$ from $X_i$
    % \State $K_{\leq i} = [K_{<i} \mid K_i], \; V_{\leq i} = [V_{<i} \mid V_i]$ \Comment{Concatenate cache}
    \State $(\hat{K}, \hat{V}) \gets \textsc{QuoKA}(Q_i, K_{<i}, V_{<i}, B_{\text{SA}})$ \Comment{Subselect KV cache}
    \State $Y_i \gets \textsc{Attention}(Q_i, [\hat{K} \mid K_i], [\hat{V} \mid V_i])$
    \State Append $Y_i$ to $\mathcal{Y}$
    \State $K_{<i+1} \gets [K_{< i}|K_i], \; V_{<i+1} \gets [V_{<i}|V_i]$ \Comment{Update KV cache}
\EndFor
\State \Return $Y \gets \mathrm{Concat}(\mathcal{Y})$
\end{algorithmic}
\end{algorithm}

%% file: contents/appendix_timing.tex
\section{Additional Timing Results}

% All models were run using the HuggingFace implementation with \texttt{bfloat16} precision, and FlashAttention was enabled for the dense baseline. Experiments were performed on an Nvidia A$100$ GPU, with each data point averaged over $100$ trials. Hyperparameters for all methods were selected according to the recommendations of the original publications.

\begin{figure*}[htbp!]
     \centering
     \begin{subfigure}[b]{0.45\linewidth}
         \centering
         \includegraphics[width=\linewidth]{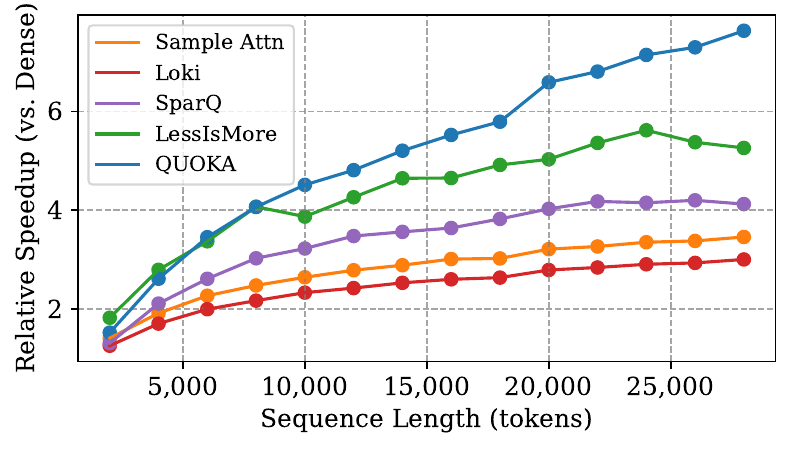}
         \caption{Attention latency (decode) on NVIDIA A100}
         \label{fig:standalone_timing_math_500}
     \end{subfigure}
     \hfill
     \begin{subfigure}[b]{0.45\linewidth}
         \centering
         \includegraphics[width=\linewidth]{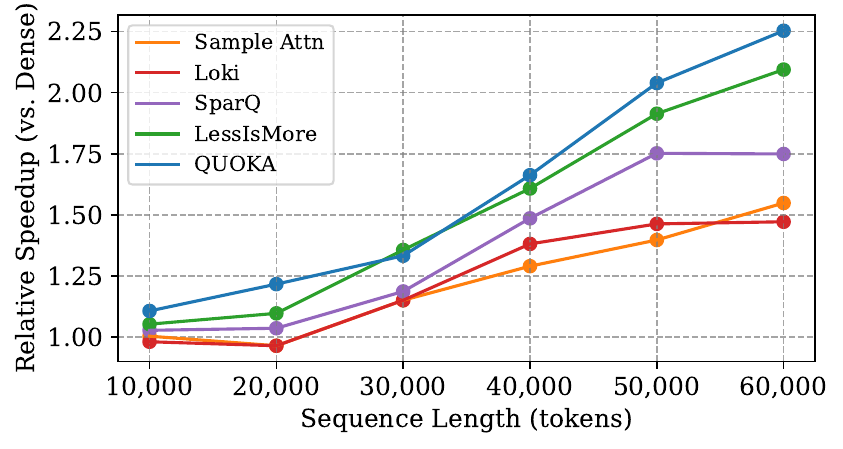}
         \caption{End-to-End latency (decode) on NVIDIA A100}
         \label{fig:end_to_end_timing_math_500}
     \end{subfigure}
     
     % \vspace{-2mm}
     \caption{Speedup versus full attention of an increasing number of decoding steps for a standalone attention module and end to end model (Qwen3-4B). Timings were measured on an Nvidia A100 GPU across 100 trials.}
    \label{fig:timing_plots_decode_math_500}
    % \vspace{-5mm}
\end{figure*}

% \begin{figure*}[htbp!]
%      \centering
%      \begin{subfigure}[b]{0.49\linewidth}
%          \centering
%          \includegraphics[width=\linewidth]{figures/PlotCPULatencyConsumer.pdf}
%          \caption{Attention Speedup on Intel Xeon W-2125 CPU}
%          \label{fig:standalone_timing_math_500}
%      \end{subfigure}
%      \hfill
%      \begin{subfigure}[b]{0.49\linewidth}
%          \centering
%          \includegraphics[width=\linewidth]{figures/PlotGPULatencyConsumer.pdf}
%          \caption{Attention Speedup on RTX2080}
%          \label{fig:end_to_end_timing_math_500}
%      \end{subfigure}     
%      % \vspace{-2mm}
%      \caption{\textbf{Speedup compared to full attention on Intel Xeon W-2125 CPU and RTX 2080 GPU.} The consistent superiority of \ourmethod{} highlights its portability and effectiveness across both CPU and GPU platforms.}
%     \label{fig:timing_plots_consumer_math_500}
%     % \vspace{-5mm}
% \end{figure*}

%% file: contents/appendix_complexity.tex
\section{Runtime and Memory Complexity}
\label{appendix:runtime-and-memory-complexity}

% $n_{\mathrm{KV}}$, 
%          number of attention heads $n_{\mathrm{Q}}$

We compute the runtime and memory complexity for \ourmethod{} and competing sparse attention methods in \cref{table:runtime-and-memory}.
For prefill chunk size $B_\text{CP}$, KV cache length $T$, number of $KV$ heads $n_{\mathrm{KV}}$, number of heads $n_\mathrm{Q}$, hidden dimension $d$ and number of subselected queries $N_\mathrm{Q}$, \ourmethod{} has $\mathcal{O}(B_\text{CP}+(N_Q(1+dn_{\mathrm{KV}}))T) $ runtime complexity and $\mathcal{O}(n_{\mathrm{KV}}N_QT)$ memory complexity. Note that both terms require $n_{\mathrm{KV}}$ rather $n_\mathrm{Q}$ where $n_\mathrm{Q} > n_{\mathrm{KV}}$, resulting in significant compute and memory savings. For SampleAttention, given the same parameters we obtain $\mathcal{O}(( d n_\mathrm{Q} + n_\mathrm{Q}/n_{\mathrm{KV}} + n_{\mathrm{KV}})N_QT)$ runtime complexity and $\mathcal{O}(n_\mathrm{Q}N_QT)$ memory complexity. Note that since this method computes attention logits before aggregation, $n_\mathrm{Q}$ appears in both terms. For Loki and SparQ, let $d_l < d$ be the lower dimension used to downproject the channels of $Q$ and $K$. With this SparQ has runtime complexity $\mathcal{O}(B_\text{CP} T d_l n_\mathrm{Q})$ and memory complexity $\mathcal{O}(n_\mathrm{Q}B_\text{CP}T)$ whereas Loki has runtime complexity $\mathcal{O}(d_l n_\mathrm{Q} (B_\text{CP} T + d (B_\text{CP} + T)))$ due to the matrix multiplications and memory complexity $\mathcal{O}(n_\mathrm{Q}B_\text{CP}T)$. Note that Loki additionally has memory overhead of $\mathcal{O}(d d_l n_\mathrm{Q})$ to store the downprojection matrices in each layer. Finally, since LessIsMore is not applied uniformly across layers, where $L$ is the number of layers, we amortize complexity to obtain runtime complexity of $\mathcal{O}(dn_\mathrm{Q}B_\text{CP}T/L)$ and memory complexity of $\mathcal{O}(n_\mathrm{Q}B_\text{CP}T/L)$. Note that, asymptotically our method is more efficient than other methods due to the fact that $n_{\mathrm{KV}} < n_\mathrm{Q}$, justifying our pre-aggregation design.

\begin{table}[h]
\caption{\textbf{Runtime and memory complexity of sparse attention methods.}}
\label{table:runtime-and-memory}
\begin{center}
\begin{tabular}{lll} 
\toprule
 & Runtime Complexity & Memory Complexity \\
\midrule
\ourmethod{} & $\mathcal{O}(B_\text{CP}+(N_Q(1+dn_{\mathrm{KV}}))T) $ & $\mathcal{O}(n_{\mathrm{KV}}N_QT)$ \\ 
SampleAttention & $\mathcal{O}(( d n_\mathrm{Q} + n_\mathrm{Q}/n_{\mathrm{KV}} + n_{\mathrm{KV}})N_QT)$ & $\mathcal{O}(n_\mathrm{Q}N_QT)$ \\ 
SparQ & $\mathcal{O}(B_\text{CP} T d_l n_\mathrm{Q})$ & $\mathcal{O}(n_\mathrm{Q}B_\text{CP}T)$ \\ 
Loki & $\mathcal{O}(d_l n_\mathrm{Q} (B_\text{CP} T + d (B_\text{CP} + T)))$ & $\mathcal{O}(n_\mathrm{Q}B_\text{CP}T)$ \\ 
LessIsMore & $\mathcal{O}(dn_\mathrm{Q}B_\text{CP}T/L)$ & $\mathcal{O}(n_\mathrm{Q}B_\text{CP}T/L)$ \\ 
\bottomrule
\end{tabular}
\end{center}
\end{table}

%% file: contents/appendix_proofs.tex
\section{Mathematical Proofs}
\label{sec:proof}
We provide the statement of \cref{theorem:keydiff-key-query} for completeness below along with the accompanying proof. Note that due to the unique geometry of LLM attention heads, this theorem implies that if $q_0$ is angularly distant from $M_Q$, it will have higher attention scores with most keys. Specifically, as $\textrm{CosSim}(M_Q, k) \rightarrow -1$, the cosine similarity $\textrm{CosSim}(M_Q, {q^*})$ is bounded above by a monotonically decreasing function converging to $-\frac12$.

\newtheorem*{thm1:appendix}{Theorem 1}
\begin{thm1:appendix}
Consider tokens a fixed query $q_0$ and key $k$, and let the average of a set of queries be denoted $M_Q$. Suppose $\textrm{CosSim}({k}, q_0) = \beta_k > 0$ and $\textrm{CosSim}(M_Q, k) = \alpha_k < 0$. Then 
\begin{equation}
\textrm{CosSim}(M_Q, {q^*}) \leq 1 + \alpha_k \beta_k - 0.5\alpha_k^2 -0.5\beta_k^2.
% No need for label since it's already labeled in the method_trimmed.tex file
%\label{eq:key-query-eq} 
\end{equation}
\end{thm1:appendix}

\begin{proof}
First compute the cosine similarity of $M_Q$ with $q^*$: $\textrm{CosSim}(M_Q, {q^*}) = \frac{{q^*}^\top M_Q}{||{q^*}|| ||M_Q||}$. We can expand $M_Q$ in an orthonormal basis containing $k/||k||$, $\{k/||k||,r_1, ..., r_{n-1}\}$ where $M_Q = ||M_Q|| \left(\alpha_q  k/||k|| +  \sum\limits_{i=1}^{n-1} \alpha_i r_i \right)$ and $\alpha_i = \textrm{CosSim}(M_Q, r_i)$. Let $\beta_i = \textrm{CosSim}({q^*}, r_i)$ and note since $r_i$ are orthonormal we have that $\alpha_k^2 + \sum\limits_{i=1}^{n-1}\alpha_i^2 = 1$ and that $\beta_k^2 + \sum\limits_{i=1}^{n-1}\beta_i^2 = 1$. Then

\begin{align*}
    \frac{{q^*}^\top M_Q}{||{q^*}|| ||M_Q||} & = \frac{{q^*}^\top \left(||M_Q||\alpha_k  k/||k|| +  \sum\limits_{i=1}^{n-1} ||M_Q|| \alpha_i r_i \right)}{||{q^*}|| ||M_Q||} \\
    &= \alpha_k \beta_k + \frac{1}{||{q^*}||} \sum\limits_{i=1}^{n-1} \alpha_i {q^*}^\top r_i \\
    & = \alpha_k \beta_k + \sum\limits_{i=1}^{n-1} \alpha_i \beta_i \\
    & \leq \alpha_k \beta_k + \sum\limits_{i=1}^{n-1} |\alpha_i| |\beta_i| \\
\intertext{Using Young's inequality:}
    & \leq \alpha_k \beta_k + \frac{1}{2}\sum\limits_{i=1}^{n-1} \alpha_i^2 + \beta_i^2 \\
    & = \alpha_k \beta_k +  \frac{1}{2}(1-\alpha_k^2) + \frac{1}{2}(1-\beta_k^2)\\
    & = 1 + \alpha_k \beta_k - \frac{1}{2}\alpha_k^2 -\frac{1}{2}\beta_k^2 \\
\end{align*}
\end{proof}

%% file: contents/appendix_niah.tex
\section{Additional Needle In a Haystack Experiments}

In \cref{fig:needle_haystack_extended} we show the results of the Needle In a Haystack experiment utilizing a variety of selective attention methods. We used the Llama3.2-3B-Instruct model and chunked prefill with $B_\text{CP}= 128$ with selective budget $B_\text{SA}=2048$. The figure clearly shows the degradation incurred using most selective attention techniques with chunked prefill, while \ourmethod{} is competitive even with full attention. This indicates that information is retained and recalled significantly more effectively using \ourmethod{} under chunked prefill. 

\begin{figure*}[t]
     \centering
     \begin{subfigure}[b]{0.32\linewidth}
         \centering
         \includegraphics[width=\linewidth]{figures/haystack_plot_SPARK.pdf}
         \caption{\ourmethod{}}
     \end{subfigure}
     \hfill
     \begin{subfigure}[b]{0.32\linewidth}
         \centering
         \includegraphics[width=\linewidth]{figures/haystack_plot_SampleAttention.pdf}
         \caption{SampleAttention}
     \end{subfigure}
        \hfill
    \begin{subfigure}[b]{0.32\linewidth}
         \centering
         \includegraphics[width=\linewidth]{figures/haystack_plot_full.pdf}
         \caption{Full}
     \end{subfigure}
      \hfill

      % \vspace{2em}  % space between the two rows
    
      % --- Bottom row: 2 subfigures centered as a block ---
      \begin{subfigure}[b]{0.32\linewidth}
         \centering
         \includegraphics[width=\linewidth]{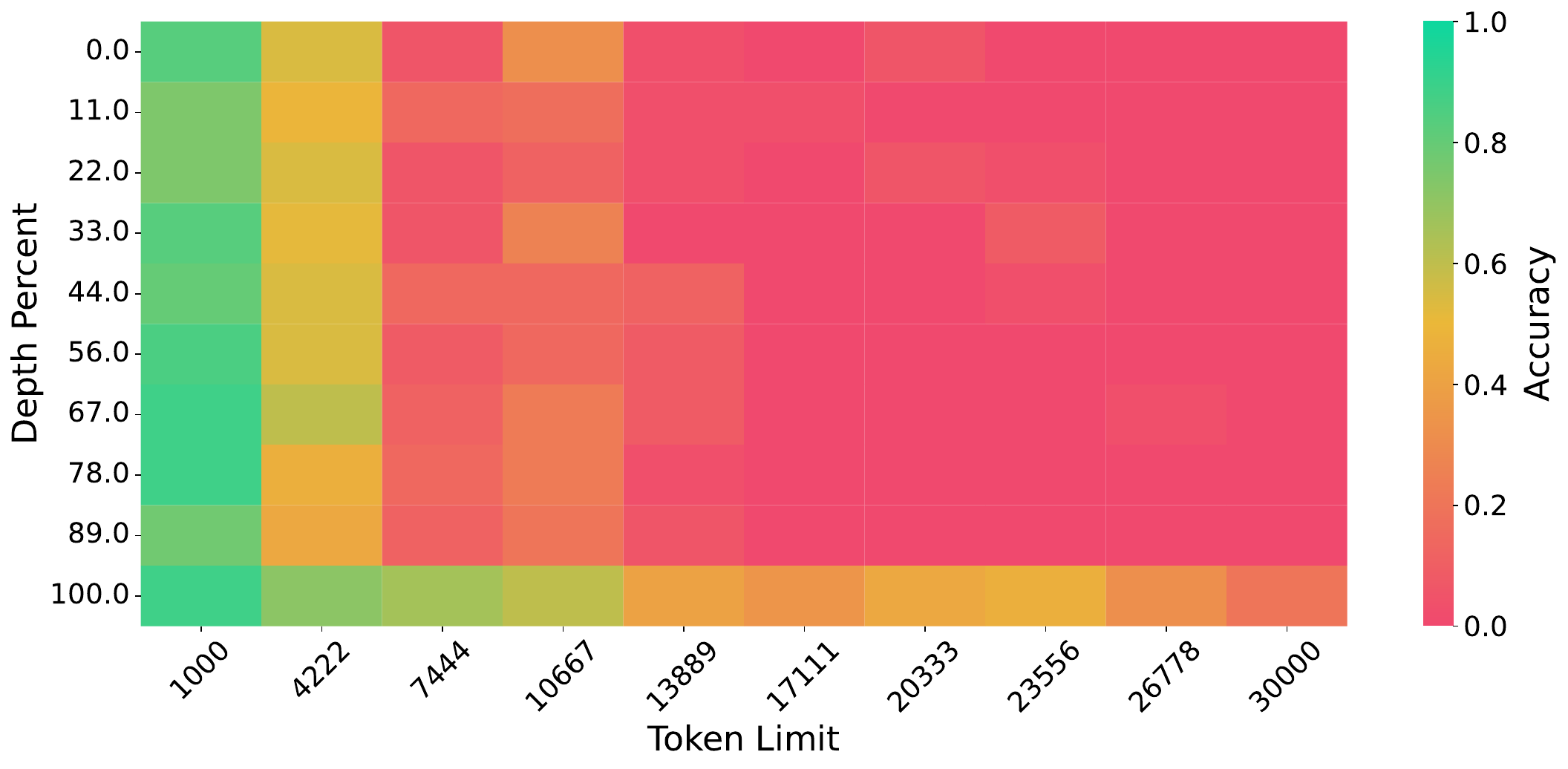}
         \caption{Loki}
     \end{subfigure}
     \hfill
     \begin{subfigure}[b]{0.32\linewidth}
         \centering
         \includegraphics[width=\linewidth]{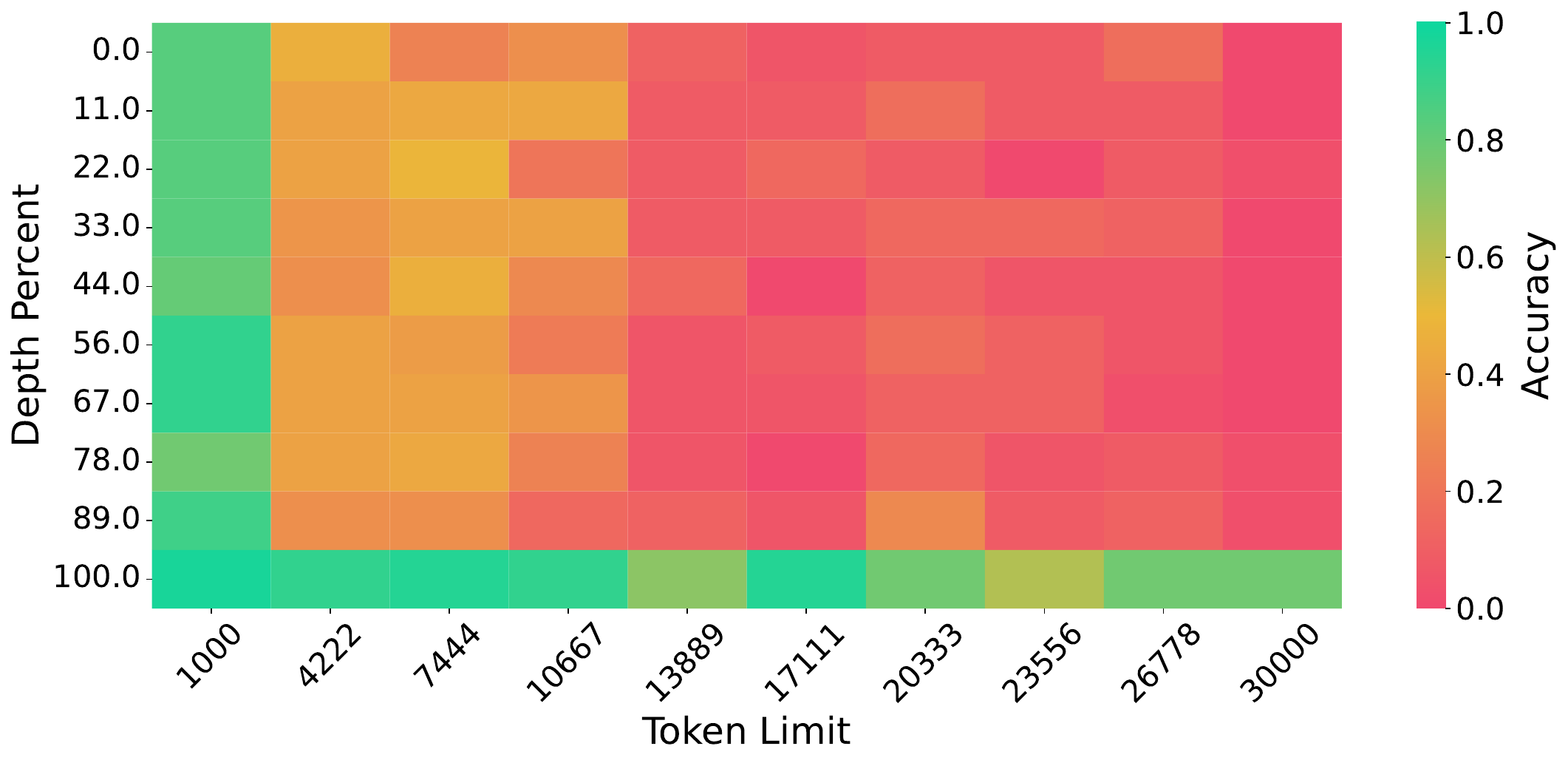}
         \caption{SparQ}
     \end{subfigure}
        \hfill
      \begin{subfigure}[b]{0.32\linewidth}
         \centering
         \includegraphics[width=\linewidth]{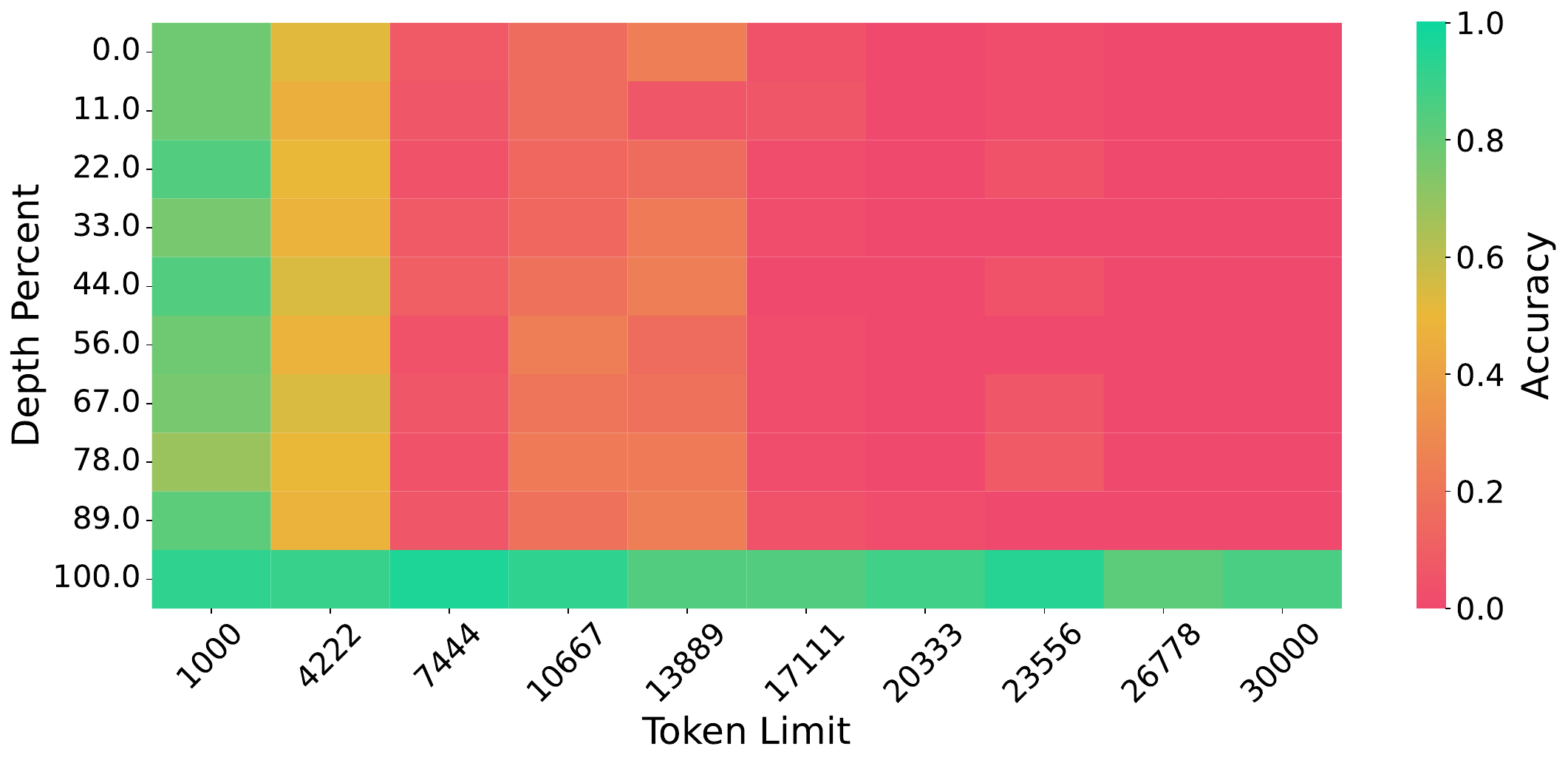}
         \caption{LessIsMore}
     \end{subfigure}
      \hfill
     \caption{Accuracy across document length and needle depth for NIAH using additional sparse attention methods with $B_\text{SA}=2048$ and $B_{\text{CP}}=128$.} 
    \label{fig:needle_haystack_extended}
    \vspace{-3mm}
\end{figure*}

%% file: contents/appendix_ruler.tex
\section{Additional RULER results}
We provide additional results of \ourmethod{} on the RULER benchmark, varying prompt length and selective attention budget $B_\text{SA}$. As shown in \cref{table:ruler_ourmethod_b_sa}, \ourmethod{} achieves near-dense accuracy even with $1/8 - 1/4$ of the full KV cache. This suggests that a significant speed up can be obtained for long context tasks with relative little performance drop.

\input{tables/ruler_ggattn}

%% file: tables/ruler_ggattn.tex
\begin{table*}[htbp!]
\caption{\textbf{\ourmethod{} RULER results} across prompt lengths and $B_\text{SA} \in \{1024, 2048, 4096\}$.}
\label{table:ruler_ourmethod_b_sa}
\begin{center}
\resizebox{.50\textwidth}{!}{
\begin{tabular}{l c ccccc}
\toprule

\multirow{2.5}{*}{Model} & \multirow{2.5}{*}{Budget} & \multicolumn{4}{c}{Prompt Length} \\
\cmidrule(lr){3-6} 
& &
{4096} & {8192} & 
{16384} & {32768}
\\
\midrule
\multirow{4}{*}{Llama-3.2-3B}
& Full & 87.50 & 81.33 & 76.98 & 74.31\\
& 4096 & 87.51 & 80.60 & 75.69 & 69.95\\
& 2048 & 87.33 & 80.20 & 73.79 & 63.67\\
& 1024 & 86.71 & 80.19 & 70.90 & 57.01\\
\midrule

\multirow{4}{*}{Llama-3.1-3B}
& Full & 89.45 & 90.24 & 90.53 & 86.65\\
& 4096 & 89.45 & 89.03 & 87.46 & 81.33\\
& 2048 & 89.20 & 88.82 & 86.21 & 78.86 \\
& 1024 & 89.24 & 89.51 & 83.25 & 70.04 \\
\midrule

\multirow{4}{*}{Qwen-2.5-3B}
& Full & 89.57 & 81.99 & 76.09 & 71.69\\
& 4096 & 89.53 & 81.53 & 74.83 & 68.43\\
& 2048 & 89.56 & 79.41 & 71.13 & 63.82\\
& 1024 & 87.85 & 74.27& 66.82& 59.37\\
\midrule

\multirow{4}{*}{Qwen-3-4B}
& Full & 93.32 & 91.68 & 91.18 & 88.54\\
& 4096 & 93.32 & 91.83 & 91.06 & 87.68\\
& 2048 & 93.13 & 91.80 & 88.87 & 85.23\\
& 1024 & 93.73 & 91.07 & 88.57 & 74.83\\
\midrule

\multirow{4}{*}{Smollm3}
& Full & 91.21 & 83.46 & 80.11 & 75.18\\
& 4096 & 91.21 & 83.70 & 79.66 & 71.83\\
& 2048 & 90.96 & 83.06 & 78.26 & 67.65\\
& 1024 & 89.97 & 79.94 & 72.69 & 61.37\\
\bottomrule

\end{tabular}
}

\end{center}
\end{table*}

%% file: contents/appendix_longbench.tex
\section{Extended LongBench Results}

In \cref{table:longbench_compare_appendix,table:longbench_compare_appendix2} we present the performance on individual tasks of different models using different sparse attention methods across different selective attention budgets $B_\text{SA}$. In particular, \cref{table:longbench_compare_appendix} shows detailed task scores for different sparse attention methods across an array of models in which \ourmethod{} consistently outperforms by a gap of $10-40\%$. \cref{table:longbench_compare_appendix2} show a more detailed ablation studying the effects of varying $B_\text{SA}$ on LongBench tasks using only \ourmethod{}. We can see that in general, a selective budget of $1024$ achieves less than $5\%$ error overall, which is remarkable considering this is less than $1/10$ of the average task length for the considered LongBench tasks.

\input{tables/longbench_compare_appendix}
\newpage

\input{tables/longbench_ggattn_appendix}
\newpage

%% file: tables/longbench_compare_appendix.tex
\begin{table*}[!t]
\caption{\textbf{LongBench results} comparing performance of different selective attention mechanisms across different budgets.}
\label{table:longbench_compare_appendix}
\vspace{-5mm}
\begin{center}
\resizebox{\textwidth}{!}{
\begin{tabular}{ccccccccccccccccccc}
\toprule

%%%%%%%%%%%%%%%%%%%%%%%%%%%%%%%%%%%%%%%%%%%%%%%%%%%%%%%%
%%%%%%%%%%%%%%%%%%%%% Header Rows %%%%%%%%%%%%%%%%%%%%%&
%%%%%%%%%%%%%%%%%%%%%%%%%%%%%%%%%%%%%%%%%%%%%%%%%%%%%%%%
&
&
\multicolumn{3}{c}{Single Doc. QA} &
\multicolumn{3}{c}{Multi Doc. QA} &
\multicolumn{3}{c}{Summarization} &
\multicolumn{3}{c}{Few$\-$shot Learning} &
\multicolumn{1}{c}{Synthetic} &
\multicolumn{2}{c}{Code} &
\\

\cmidrule(lr){3-5}
\cmidrule(lr){6-8}
\cmidrule(lr){9-11}
\cmidrule(lr){12-14}
\cmidrule(lr){15-15}
\cmidrule(lr){16-17}

&
& 
{Narrative QA} & {Qasper} & {MF-en} & 
{HotpotQA} & {2WikiMQA} & {Musique} & 
{GovReport} & {QMSum} & {MultiNews} & 
{TREC} & {TriviaQA} & {SAMSum} &  {PR-en} &
{Lcc} & {RB-P} & 
{Avg.} 
\\

\midrule
\midrule
\multicolumn{2}{c}{llama3.2-3B-Instruct}  & 22.91 & 40.49 & 49.99 & 50.96 & 43.29 & 26.97 & 33.42 & 24.28 & 24.98 & 73.5 & 90.17 & 42.04 & 96.0 & 56.53 & 56.95 & 48.832 \\
 
\midrule 

\multirow{3}{*}{\ourmethod{}}
 &  512 & 0.844 & 0.98 & 0.979 & 0.978 & 0.964 & 0.765 & 0.952 & 0.864 & 1.042 & 0.973 & 1.0 & 0.995 & 0.766 & 1.074 & 1.003 & 0.945  \\ 
 &  1024 & 0.867 & 1.001 & 0.995 & 1.005 & 1.005 & 0.786 & 0.978 & 0.908 & 1.034 & 1.007 & 1.005 & 1.003 & 0.927 & 1.034 & 1.021 & 0.972  \\ 
 &  2048 & 0.973 & 0.991 & 1.029 & 1.019 & 0.982 & 0.863 & 0.984 & 0.96 & 1.004 & 1.007 & 1.006 & 0.987 & 0.969 & 1.016 & 0.996 & 0.986  \\ 
 
\midrule 

\multirow{3}{*}{SampleAttention}
 &  512 & 0.759 & 0.66 & 0.647 & 0.751 & 0.645 & 0.276 & 0.917 & 0.787 & 1.028 & 0.667 & 0.983 & 0.977 & 0.12 & 0.962 & 0.896 & 0.738  \\ 
 &  1024 & 0.814 & 0.817 & 0.79 & 0.764 & 0.694 & 0.502 & 0.953 & 0.832 & 1.019 & 0.741 & 0.982 & 0.971 & 0.234 & 0.972 & 0.916 & 0.8  \\ 
 &  2048 & 0.919 & 0.922 & 0.958 & 0.94 & 0.801 & 0.753 & 0.963 & 0.915 & 1.008 & 0.864 & 1.0 & 1.006 & 0.531 & 0.995 & 0.942 & 0.901  \\ 
 
\midrule 

\multirow{3}{*}{TidalDecode}
 &  512 & 0.697 & 0.696 & 0.463 & 0.618 & 0.557 & 0.352 & 0.724 & 0.808 & 0.975 & 0.83 & 0.906 & 0.936 & 0.104 & 0.998 & 0.887 & 0.703  \\ 
 &  1024 & 0.776 & 0.822 & 0.633 & 0.703 & 0.704 & 0.595 & 0.782 & 0.864 & 1.004 & 0.844 & 0.97 & 0.962 & 0.188 & 1.016 & 0.956 & 0.788  \\ 
 &  2048 & 0.674 & 0.921 & 0.768 & 0.857 & 0.87 & 0.649 & 0.858 & 0.87 & 1.003 & 0.946 & 0.989 & 0.99 & 0.344 & 1.013 & 1.001 & 0.85  \\ 
 
\midrule 

\multirow{3}{*}{SparQ}
 &  512 & 0.679 & 0.607 & 0.635 & 0.659 & 0.604 & 0.346 & 0.909 & 0.787 & 1.045 & 0.68 & 0.948 & 0.973 & 0.089 & 0.983 & 0.88 & 0.721  \\ 
 &  1024 & 0.75 & 0.801 & 0.779 & 0.734 & 0.795 & 0.566 & 0.936 & 0.817 & 1.027 & 0.755 & 0.985 & 0.976 & 0.224 & 0.977 & 0.907 & 0.802  \\ 
 &  2048 & 0.909 & 0.927 & 0.964 & 0.932 & 0.773 & 0.748 & 0.967 & 0.898 & 1.002 & 0.844 & 0.986 & 0.996 & 0.536 & 0.989 & 0.923 & 0.893  \\ 
 
\midrule 

\multirow{3}{*}{Loki}
 &  512 & 0.57 & 0.662 & 0.634 & 0.662 & 0.575 & 0.211 & 0.891 & 0.803 & 1.038 & 0.646 & 0.908 & 0.933 & 0.052 & 0.927 & 0.783 & 0.686  \\ 
 &  1024 & 0.878 & 0.799 & 0.683 & 0.687 & 0.649 & 0.371 & 0.923 & 0.831 & 1.022 & 0.707 & 0.947 & 0.952 & 0.104 & 0.967 & 0.835 & 0.757  \\ 
 &  2048 & 0.907 & 0.897 & 0.85 & 0.837 & 0.837 & 0.554 & 0.947 & 0.884 & 1.004 & 0.844 & 0.973 & 0.998 & 0.234 & 0.985 & 0.873 & 0.842  \\

\midrule
\midrule
\multicolumn{2}{c}{qwen2.5-3B-Instruct}  & 21.4 & 35.52 & 37.33 & 29.3 & 24.44 & 17.64 & 32.56 & 21.9 & 23.29 & 67.5 & 85.37 & 43.85 & 47.5 & 51.57 & 47.75 & 39.128 \\
 
\midrule 

\multirow{3}{*}{\ourmethod{}}
 &  512 & 0.68 & 0.833 & 0.937 & 0.834 & 0.811 & 0.578 & 0.948 & 0.903 & 1.002 & 0.889 & 0.915 & 0.977 & 0.751 & 0.974 & 1.006 & 0.869  \\ 
 &  1024 & 0.713 & 0.992 & 1.022 & 1.018 & 0.841 & 0.924 & 0.984 & 0.942 & 0.99 & 0.985 & 0.983 & 0.97 & 0.858 & 0.952 & 0.995 & 0.945  \\ 
 &  2048 & 0.974 & 0.962 & 1.02 & 0.998 & 0.813 & 1.023 & 0.998 & 0.973 & 1.0 & 1.022 & 0.972 & 0.992 & 0.879 & 1.009 & 1.018 & 0.977  \\ 
 
\midrule 

\multirow{3}{*}{SampleAttention}
 &  512 & 0.392 & 0.285 & 0.741 & 0.31 & 0.478 & 0.345 & 0.932 & 0.949 & 1.092 & 0.563 & 0.828 & 0.911 & 0.135 & 0.868 & 1.068 & 0.66  \\ 
 &  1024 & 0.476 & 0.43 & 0.907 & 0.52 & 0.529 & 0.429 & 0.943 & 0.976 & 1.077 & 0.77 & 0.917 & 0.956 & 0.322 & 0.922 & 1.016 & 0.746  \\ 
 &  2048 & 0.533 & 0.523 & 0.985 & 0.619 & 0.66 & 0.687 & 0.942 & 1.014 & 1.076 & 0.867 & 0.969 & 0.971 & 0.565 & 0.966 & 1.039 & 0.828  \\ 
 
\midrule 

\multirow{3}{*}{TidalDecode}
 &  512 & 0.172 & 0.188 & 0.422 & 0.179 & 0.382 & 0.159 & 0.569 & 0.726 & 0.967 & 0.526 & 0.307 & 0.642 & 0.057 & 0.846 & 0.766 & 0.461  \\ 
 &  1024 & 0.245 & 0.273 & 0.654 & 0.249 & 0.426 & 0.162 & 0.67 & 0.794 & 1.035 & 0.681 & 0.456 & 0.775 & 0.107 & 0.917 & 0.901 & 0.556  \\ 
 &  2048 & 0.34 & 0.418 & 0.691 & 0.368 & 0.564 & 0.234 & 0.814 & 0.864 & 1.068 & 0.904 & 0.693 & 0.882 & 0.094 & 0.981 & 0.967 & 0.659  \\ 
 
\midrule 

\multirow{3}{*}{SparQ}
 &  512 & 0.381 & 0.262 & 0.708 & 0.324 & 0.461 & 0.266 & 0.911 & 0.922 & 1.075 & 0.489 & 0.795 & 0.892 & 0.151 & 0.881 & 1.022 & 0.636  \\ 
 &  1024 & 0.423 & 0.446 & 0.85 & 0.446 & 0.529 & 0.363 & 0.932 & 0.966 & 1.078 & 0.696 & 0.894 & 0.954 & 0.339 & 0.922 & 1.046 & 0.726  \\ 
 &  2048 & 0.421 & 0.562 & 0.939 & 0.622 & 0.573 & 0.582 & 0.945 & 1.008 & 1.075 & 0.867 & 0.947 & 0.957 & 0.611 & 0.965 & 1.044 & 0.808  \\ 
 
\midrule 

\multirow{3}{*}{Loki}
 &  512 & 0.34 & 0.268 & 0.612 & 0.282 & 0.437 & 0.208 & 0.857 & 0.883 & 1.097 & 0.489 & 0.604 & 0.796 & 0.092 & 0.892 & 0.973 & 0.589  \\ 
 &  1024 & 0.438 & 0.408 & 0.773 & 0.289 & 0.563 & 0.304 & 0.884 & 0.937 & 1.081 & 0.674 & 0.639 & 0.908 & 0.23 & 0.962 & 0.982 & 0.671  \\ 
 &  2048 & 0.494 & 0.538 & 0.95 & 0.52 & 0.645 & 0.607 & 0.956 & 1.006 & 1.074 & 0.822 & 0.884 & 0.932 & 0.432 & 0.978 & 0.961 & 0.787  \\ 

 \midrule
\midrule
\multicolumn{2}{c}{smollm3}  & 19.46 & 37.83 & 43.11 & 18.42 & 19.58 & 10.02 & 34.18 & 23.1 & 26.78 & 76.0 & 84.86 & 44.94 & 80.96 & 67.19 & 63.1 & 43.302 \\ 
 
\midrule 

\multirow{3}{*}{\ourmethod{}}
 &  512 & 0.89 & 0.919 & 1.003 & 1.243 & 1.201 & 0.915 & 0.942 & 0.924 & 0.991 & 0.921 & 1.031 & 0.974 & 1.017 & 1.008 & 0.987 & 0.998  \\ 
 &  1024 & 0.955 & 0.985 & 1.007 & 1.313 & 1.162 & 1.109 & 0.981 & 0.977 & 0.981 & 0.947 & 1.02 & 0.981 & 1.032 & 1.01 & 0.996 & 1.03  \\ 
 &  2048 & 0.917 & 1.024 & 1.012 & 1.244 & 1.135 & 1.135 & 0.988 & 0.99 & 0.996 & 0.974 & 1.014 & 1.006 & 0.989 & 1.006 & 0.996 & 1.028  \\ 
 
\midrule 

\multirow{3}{*}{SampleAttention}
 &  512 & 0.714 & 0.662 & 0.879 & 0.926 & 0.876 & 0.738 & 0.919 & 0.928 & 0.984 & 0.809 & 1.025 & 0.959 & 0.457 & 1.026 & 0.939 & 0.856  \\ 
 &  1024 & 0.748 & 0.837 & 0.918 & 1.088 & 1.008 & 0.856 & 0.958 & 0.941 & 0.984 & 0.908 & 1.032 & 0.986 & 0.685 & 1.003 & 0.984 & 0.929  \\ 
 &  2048 & 0.797 & 0.987 & 0.965 & 1.062 & 1.01 & 0.953 & 0.984 & 0.943 & 0.993 & 0.954 & 1.035 & 0.975 & 0.848 & 1.01 & 0.976 & 0.966  \\ 
 
\midrule 

\multirow{3}{*}{TidalDecode}
 &  512 & 0.684 & 0.615 & 0.715 & 0.72 & 0.689 & 0.637 & 0.815 & 0.883 & 0.96 & 0.842 & 0.887 & 0.951 & 0.21 & 0.967 & 0.902 & 0.765  \\ 
 &  1024 & 0.761 & 0.778 & 0.808 & 0.742 & 0.812 & 0.658 & 0.893 & 0.9 & 0.987 & 0.888 & 0.976 & 0.965 & 0.544 & 0.976 & 0.937 & 0.842  \\ 
 &  2048 & 0.831 & 0.936 & 0.903 & 0.844 & 0.868 & 0.738 & 0.941 & 0.939 & 0.993 & 0.961 & 1.007 & 0.962 & 0.884 & 0.981 & 0.982 & 0.918  \\ 
 
\midrule 

\multirow{3}{*}{SparQ}
 &  512 & 0.391 & 0.58 & 0.836 & 0.656 & 0.771 & 0.535 & 0.898 & 0.881 & 1.006 & 0.704 & 0.886 & 0.875 & 0.099 & 0.973 & 0.785 & 0.725  \\ 
 &  1024 & 0.598 & 0.865 & 0.886 & 0.699 & 0.886 & 0.52 & 0.953 & 0.92 & 0.987 & 0.829 & 0.951 & 0.906 & 0.26 & 0.978 & 0.841 & 0.805  \\ 
 &  2048 & 0.634 & 0.985 & 0.967 & 1.003 & 0.975 & 0.737 & 0.997 & 0.942 & 0.993 & 0.941 & 0.995 & 0.966 & 0.78 & 1.001 & 0.921 & 0.922  \\ 
 
\midrule 

\multirow{3}{*}{Loki}
 &  512 & 0.063 & 0.195 & 0.423 & 0.143 & 0.423 & 0.111 & 0.246 & 0.437 & 0.903 & 0.467 & 0.318 & 0.488 & 0.022 & 0.887 & 0.626 & 0.384  \\ 
 &  1024 & 0.542 & 0.819 & 0.86 & 0.791 & 0.913 & 0.637 & 0.787 & 0.913 & 0.991 & 0.796 & 0.97 & 0.93 & 0.156 & 0.984 & 0.933 & 0.801  \\ 
 &  2048 & 0.206 & 0.63 & 0.73 & 0.305 & 0.871 & 0.231 & 0.599 & 0.787 & 0.99 & 0.803 & 0.556 & 0.79 & 0.043 & 0.985 & 0.804 & 0.622  \\ 

\midrule
\midrule
\multicolumn{2}{c}{qwen3-4B}  & 28.02 & 43.75 & 53.46 & 55.55 & 43.87 & 31.82 & 32.48 & 24.81 & 25.08 & 73.5 & 88.26 & 43.69 & 96.5 & 64.14 & 59.02 & 50.93 \\ 
 
\midrule 

\multirow{3}{*}{\ourmethod{}}
 &  512 & 0.814 & 0.964 & 0.996 & 0.974 & 0.89 & 1.054 & 0.996 & 0.94 & 1.032 & 0.946 & 1.008 & 1.005 & 0.876 & 0.985 & 1.009 & 0.966  \\ 
 &  1024 & 0.914 & 0.986 & 1.033 & 0.988 & 1.016 & 0.963 & 0.992 & 0.963 & 1.02 & 0.993 & 1.024 & 1.028 & 0.959 & 0.987 & 1.012 & 0.992  \\ 
 &  2048 & 0.876 & 0.992 & 1.024 & 1.015 & 0.977 & 0.995 & 0.989 & 0.988 & 1.008 & 0.993 & 1.011 & 1.023 & 1.026 & 1.008 & 1.004 & 0.995  \\ 
 
\midrule 

\multirow{3}{*}{SampleAttention}
 &  512 & 0.525 & 0.483 & 0.709 & 0.725 & 0.697 & 0.536 & 0.966 & 0.822 & 1.025 & 0.871 & 0.963 & 0.957 & 0.197 & 0.949 & 0.894 & 0.755  \\ 
 &  1024 & 0.665 & 0.731 & 0.834 & 0.869 & 0.803 & 0.627 & 0.991 & 0.876 & 1.02 & 0.966 & 0.971 & 1.003 & 0.772 & 1.006 & 0.984 & 0.875  \\ 
 &  2048 & 0.801 & 0.947 & 0.972 & 0.892 & 0.9 & 0.72 & 0.994 & 0.923 & 0.998 & 1.0 & 0.989 & 1.013 & 1.021 & 1.0 & 1.03 & 0.947  \\ 
 
\midrule 

\multirow{3}{*}{TidalDecode}
 &  512 & 0.401 & 0.513 & 0.597 & 0.583 & 0.582 & 0.395 & 0.774 & 0.842 & 0.96 & 0.796 & 0.76 & 0.915 & 0.047 & 0.936 & 0.878 & 0.665  \\ 
 &  1024 & 0.483 & 0.682 & 0.731 & 0.709 & 0.728 & 0.474 & 0.866 & 0.879 & 0.996 & 0.871 & 0.917 & 0.966 & 0.358 & 0.996 & 0.931 & 0.773  \\ 
 &  2048 & 0.537 & 0.83 & 0.831 & 0.903 & 0.781 & 0.586 & 0.925 & 0.919 & 1.01 & 0.952 & 0.968 & 0.982 & 0.803 & 0.996 & 1.002 & 0.868  \\ 
 
\midrule 

\multirow{3}{*}{SparQ}
 &  512 & 0.459 & 0.525 & 0.606 & 0.544 & 0.702 & 0.448 & 0.906 & 0.832 & 1.039 & 0.782 & 0.824 & 0.914 & 0.048 & 0.878 & 0.789 & 0.686  \\ 
 &  1024 & 0.516 & 0.749 & 0.744 & 0.722 & 0.792 & 0.567 & 0.948 & 0.867 & 1.032 & 0.891 & 0.952 & 0.963 & 0.14 & 0.964 & 0.878 & 0.782  \\ 
 &  2048 & 0.527 & 0.895 & 0.923 & 0.795 & 0.739 & 0.797 & 0.985 & 0.917 & 1.012 & 0.966 & 0.975 & 1.002 & 0.585 & 0.993 & 0.962 & 0.872  \\ 
 
\midrule 

\multirow{3}{*}{Loki}
 &  512 & 0.295 & 0.328 & 0.687 & 0.554 & 0.585 & 0.426 & 0.934 & 0.804 & 1.056 & 0.884 & 0.756 & 0.594 & 0.076 & 0.715 & 0.637 & 0.622  \\ 
 &  1024 & 0.416 & 0.605 & 0.822 & 0.694 & 0.71 & 0.563 & 0.986 & 0.864 & 1.031 & 0.912 & 0.869 & 0.721 & 0.171 & 0.826 & 0.73 & 0.728  \\ 
 &  2048 & 0.543 & 0.822 & 0.898 & 0.837 & 0.824 & 0.781 & 1.006 & 0.902 & 1.016 & 0.971 & 0.946 & 0.892 & 0.404 & 0.94 & 0.824 & 0.841  \\

\bottomrule
\end{tabular}
}
\end{center}
\end{table*}

%% file: tables/longbench_ggattn_appendix.tex
\begin{table*}[!t]
\caption{\textbf{LongBench results (Higher is better)}. Each model utilizes chunked prefill with $B=128$ with different selective budgets. Raw scores reported for base models and relative errors (percent of performance compared to baseline) presented for \ourmethod{}.}
\label{table:longbench_compare_appendix2}
\vspace{-5mm}
\begin{center}
\resizebox{\textwidth}{!}{
\begin{tabular}{ccccccccccccccccccc}
\toprule

%%%%%%%%%%%%%%%%%%%%%%%%%%%%%%%%%%%%%%%%%%%%%%%%%%%%%%%%
%%%%%%%%%%%%%%%%%%%%% Header Rows %%%%%%%%%%%%%%%%%%%%%&
%%%%%%%%%%%%%%%%%%%%%%%%%%%%%%%%%%%%%%%%%%%%%%%%%%%%%%%%
\multirow{2}{*}{Method}
&
\multirow{2}{*}{Budget}
&
\multicolumn{3}{c}{Single Doc. QA} &
\multicolumn{3}{c}{Multi Doc. QA} &
\multicolumn{3}{c}{Summarization} &
\multicolumn{3}{c}{Few$\-$shot Learning} &
\multicolumn{1}{c}{Synthetic} &
\multicolumn{2}{c}{Code} &
\\

\cmidrule(lr){3-5}
\cmidrule(lr){6-8}
\cmidrule(lr){9-11}
\cmidrule(lr){12-14}
\cmidrule(lr){15-15}
\cmidrule(lr){16-17}

&
& 
{Narrative QA} & {Qasper} & {MF-en} & 
{HotpotQA} & {2WikiMQA} & {Musique} & 
{GovReport} & {QMSum} & {MultiNews} & 
{TREC} & {TriviaQA} & {SAMSum} &  {PR-en} &
{Lcc} & {RB-P} & 
{Avg.} 
\\

\midrule
\midrule
\multicolumn{2}{c}{llama3.1-8B-Instruct}  & 31.16 & 46.94 & 56.45 & 58.03 & 48.24 & 31.97 & 34.79 & 25.55 & 26.94 & 73.0 & 91.72 & 43.2 & 99.5 & 62.06 & 52.55 & 52.14 \\ 
\midrule 

\multirow{4}{*}{\ourmethod{}}
 &  256 & 0.713 & 0.909 & 0.89 & 0.823 & 0.97 & 0.874 & 0.958 & 0.895 & 1.001 & 0.836 & 0.997 & 1.027 & 0.879 & 1.079 & 1.021 & 0.925 \\
 &  512 & 0.873 & 0.954 & 0.925 & 0.903 & 0.971 & 0.831 & 0.966 & 0.926 & 1.008 & 0.938 & 1.002 & 1.033 & 0.985 & 1.055 & 1.056 & 0.962 \\
 &  1024 & 0.889 & 0.963 & 0.935 & 0.95 & 1.021 & 0.884 & 0.993 & 0.964 & 1.001 & 0.966 & 1.003 & 1.021 & 0.995 & 1.034 & 1.035 & 0.977 \\
 &  2048 & 0.992 & 0.991 & 0.948 & 0.965 & 1.036 & 0.917 & 0.998 & 0.999 & 1.002 & 0.979 & 1.004 & 1.014 & 1.0 & 1.01 & 1.022 & 0.992 \\
 &  4096 & 1.003 & 0.988 & 0.998 & 0.962 & 1.017 & 0.911 & 0.997 & 1.0 & 1.001 & 0.986 & 1.008 & 1.009 & 1.0 & 1.009 & 1.005 & 0.993 \\

\midrule
\midrule
\multicolumn{2}{c}{llama3.2-3B-Instruct}  & 22.91 & 40.49 & 49.99 & 50.96 & 43.29 & 26.97 & 33.42 & 24.28 & 24.98 & 73.5 & 90.17 & 42.04 & 96.0 & 56.53 & 56.95 & 48.832 \\ 
\midrule 

\multirow{4}{*}{\ourmethod{}}
 &  256 & 0.773 & 0.959 & 0.907 & 0.86 & 0.825 & 0.677 & 0.931 & 0.854 & 1.034 & 0.837 & 0.982 & 0.979 & 0.396 & 1.031 & 0.952 & 0.866 \\
 &  512 & 0.844 & 0.98 & 0.979 & 0.978 & 0.964 & 0.765 & 0.952 & 0.864 & 1.042 & 0.973 & 1.0 & 0.995 & 0.766 & 1.074 & 1.003 & 0.945 \\
 &  1024 & 0.867 & 1.001 & 0.995 & 1.005 & 1.005 & 0.786 & 0.978 & 0.908 & 1.034 & 1.007 & 1.005 & 1.003 & 0.927 & 1.034 & 1.021 & 0.972 \\
 &  2048 & 0.973 & 0.991 & 1.029 & 1.019 & 0.982 & 0.863 & 0.984 & 0.96 & 1.004 & 1.007 & 1.006 & 0.987 & 0.969 & 1.016 & 0.996 & 0.986 \\
 &  4096 & 0.955 & 0.996 & 1.019 & 1.024 & 1.003 & 0.959 & 0.99 & 0.968 & 0.995 & 1.0 & 1.007 & 1.002 & 0.995 & 0.999 & 0.997 & 0.994 \\

\midrule
\midrule
\multicolumn{2}{c}{qwen2.5-7B-Instruct}  & 25.84 & 37.79 & 43.02 & 47.66 & 40.11 & 25.22 & 33.52 & 22.49 & 23.12 & 66.5 & 87.46 & 44.71 & 96.25 & 57.76 & 61.77 & 47.548 \\ 
\midrule 

\multirow{4}{*}{\ourmethod{}}
 &  256 & 0.594 & 0.588 & 0.736 & 0.616 & 0.565 & 0.433 & 0.933 & 0.835 & 0.959 & 0.812 & 0.959 & 0.975 & 0.299 & 0.854 & 0.692 & 0.723 \\
 &  512 & 0.811 & 0.904 & 0.98 & 0.85 & 0.798 & 0.721 & 0.958 & 0.884 & 0.98 & 0.932 & 0.998 & 1.0 & 0.777 & 0.966 & 0.865 & 0.895 \\
 &  1024 & 0.803 & 0.983 & 0.97 & 0.996 & 0.84 & 0.856 & 0.97 & 0.954 & 0.996 & 0.925 & 0.977 & 1.02 & 0.921 & 0.99 & 0.94 & 0.943 \\
 &  2048 & 0.894 & 0.927 & 0.97 & 1.001 & 0.856 & 0.851 & 0.987 & 0.98 & 0.999 & 0.947 & 0.989 & 1.029 & 0.949 & 1.007 & 0.967 & 0.957 \\
 &  4096 & 0.937 & 0.982 & 1.032 & 1.0 & 0.931 & 0.905 & 0.987 & 1.004 & 0.998 & 1.0 & 0.992 & 1.014 & 0.968 & 1.001 & 0.99 & 0.983 \\

\midrule
\midrule
\multicolumn{2}{c}{qwen2.5-3B-Instruct}  & 21.4 & 35.52 & 37.33 & 29.3 & 24.44 & 17.64 & 32.56 & 21.9 & 23.29 & 67.5 & 85.37 & 43.85 & 47.5 & 51.57 & 47.75 & 39.128 \\ 
\midrule 

\multirow{4}{*}{\ourmethod{}}
 &  256 & 0.495 & 0.548 & 0.711 & 0.459 & 0.705 & 0.337 & 0.891 & 0.844 & 0.991 & 0.644 & 0.782 & 0.937 & 0.563 & 0.92 & 0.972 & 0.72 \\
 &  512 & 0.68 & 0.833 & 0.937 & 0.834 & 0.811 & 0.578 & 0.948 & 0.903 & 1.002 & 0.889 & 0.915 & 0.977 & 0.751 & 0.974 & 1.006 & 0.869 \\
 &  1024 & 0.713 & 0.992 & 1.022 & 1.018 & 0.841 & 0.924 & 0.984 & 0.942 & 0.99 & 0.985 & 0.983 & 0.97 & 0.858 & 0.952 & 0.995 & 0.945 \\
 &  2048 & 0.974 & 0.962 & 1.02 & 0.998 & 0.813 & 1.023 & 0.998 & 0.973 & 1.0 & 1.022 & 0.972 & 0.992 & 0.879 & 1.009 & 1.018 & 0.977 \\
 &  4096 & 0.943 & 1.04 & 1.034 & 0.968 & 0.913 & 1.022 & 0.998 & 0.997 & 1.001 & 0.985 & 0.994 & 0.994 & 0.911 & 0.985 & 0.99 & 0.985 \\

\midrule
\midrule
\multicolumn{2}{c}{qwen3-8B}  & 26.44 & 47.73 & 53.7 & 59.34 & 43.45 & 34.77 & 33.33 & 24.13 & 24.94 & 71.5 & 90.71 & 44.33 & 100.0 & 69.01 & 62.0 & 52.359 \\ 
\midrule 

\multirow{4}{*}{\ourmethod{}}
 &  256 & 0.696 & 0.819 & 0.859 & 0.763 & 0.795 & 0.572 & 0.993 & 0.848 & 0.986 & 0.93 & 0.978 & 0.985 & 0.885 & 0.973 & 0.895 & 0.865 \\
 &  512 & 0.89 & 0.921 & 0.987 & 1.007 & 0.938 & 0.868 & 1.005 & 0.951 & 0.998 & 1.007 & 0.996 & 1.01 & 1.0 & 1.009 & 1.03 & 0.974 \\
 &  1024 & 0.928 & 0.999 & 0.998 & 0.987 & 0.954 & 0.903 & 1.002 & 0.981 & 1.001 & 1.007 & 1.006 & 1.023 & 1.0 & 1.007 & 1.034 & 0.988 \\
 &  2048 & 0.958 & 0.992 & 1.014 & 1.024 & 0.932 & 0.897 & 1.007 & 0.99 & 1.001 & 1.007 & 0.999 & 1.014 & 1.0 & 0.999 & 1.008 & 0.989 \\
 &  4096 & 0.965 & 0.996 & 0.999 & 1.024 & 0.962 & 0.985 & 1.003 & 0.998 & 1.0 & 1.0 & 1.0 & 0.998 & 1.0 & 0.996 & 1.003 & 0.995 \\

\midrule
\midrule
\multicolumn{2}{c}{qwen3-4B}  & 28.02 & 43.75 & 53.46 & 55.55 & 43.87 & 31.82 & 32.48 & 24.81 & 25.08 & 73.5 & 88.26 & 43.69 & 96.5 & 64.14 & 59.02 & 50.93 \\ 
\midrule 

\multirow{4}{*}{\ourmethod{}}
 &  256 & 0.627 & 0.884 & 0.918 & 0.789 & 0.804 & 0.717 & 0.99 & 0.88 & 1.02 & 0.789 & 0.98 & 0.947 & 0.741 & 0.937 & 0.925 & 0.863 \\
 &  512 & 0.814 & 0.964 & 0.996 & 0.974 & 0.89 & 1.054 & 0.996 & 0.94 & 1.032 & 0.946 & 1.008 & 1.005 & 0.876 & 0.985 & 1.009 & 0.966 \\
 &  1024 & 0.914 & 0.986 & 1.033 & 0.988 & 1.016 & 0.963 & 0.992 & 0.963 & 1.02 & 0.993 & 1.024 & 1.028 & 0.959 & 0.987 & 1.012 & 0.992 \\
 &  2048 & 0.876 & 0.992 & 1.024 & 1.015 & 0.977 & 0.995 & 0.989 & 0.988 & 1.008 & 0.993 & 1.011 & 1.023 & 1.026 & 1.008 & 1.004 & 0.995 \\
 &  4096 & 0.91 & 1.001 & 1.011 & 1.009 & 0.997 & 0.989 & 0.994 & 0.999 & 1.002 & 1.007 & 1.001 & 0.991 & 1.021 & 1.0 & 1.003 & 0.996 \\

\midrule
\midrule
\multicolumn{2}{c}{qwen3-1.7B}  & 18.94 & 25.17 & 46.48 & 39.07 & 32.52 & 18.06 & 30.73 & 22.88 & 24.77 & 73.5 & 85.39 & 42.37 & 94.0 & 44.6 & 37.81 & 42.419 \\ 
\midrule 

\multirow{4}{*}{\ourmethod{}}
 &  256 & 0.801 & 0.947 & 0.843 & 0.711 & 0.8 & 0.587 & 0.988 & 0.91 & 1.007 & 0.864 & 0.933 & 0.959 & 0.646 & 0.832 & 0.726 & 0.837 \\
 &  512 & 0.987 & 0.986 & 0.948 & 0.88 & 0.984 & 0.822 & 1.032 & 0.97 & 1.027 & 0.946 & 1.01 & 0.978 & 0.949 & 0.933 & 0.841 & 0.953 \\
 &  1024 & 1.041 & 1.003 & 0.983 & 0.973 & 1.052 & 0.865 & 1.021 & 0.996 & 1.01 & 0.98 & 1.007 & 0.999 & 0.979 & 1.017 & 0.834 & 0.984 \\
 &  2048 & 0.968 & 0.995 & 0.967 & 0.976 & 1.015 & 1.007 & 1.005 & 1.003 & 1.0 & 0.98 & 1.013 & 1.0 & 1.0 & 1.001 & 0.869 & 0.986 \\
 &  4096 & 0.982 & 1.019 & 0.987 & 0.957 & 1.026 & 1.048 & 0.988 & 1.025 & 0.997 & 1.0 & 1.006 & 1.014 & 0.995 & 1.0 & 0.973 & 1.001 \\
\midrule
\midrule
\multicolumn{2}{c}{qwen3-30B-A3B-2507}  & 31.33 & 42.19 & 55.0 & 63.0 & 55.94 & 33.85 & 31.06 & 21.79 & 23.58 & 76.72 & 90.42 & 46.41 & 100.0 & 74.33 & 67.08 & 54.18 \\ 
 
\midrule 

\multirow{4}{*}{\ourmethod{}}
 &  256 & 0.658 & 1.048 & 0.887 & 0.862 & 0.604 & 0.61 & 1.011 & 0.904 & 0.978 & 0.899 & 0.987 & 1.0 & 1.0 & 0.979 & 0.859 & 0.886  \\ 
 &  512 & 0.903 & 1.022 & 0.95 & 0.988 & 0.897 & 0.964 & 1.006 & 0.976 & 0.993 & 0.971 & 1.002 & 1.003 & 1.0 & 1.007 & 0.992 & 0.978  \\ 
 &  1024 & 0.991 & 1.015 & 0.953 & 1.005 & 0.956 & 0.985 & 1.0 & 1.007 & 1.003 & 0.997 & 1.015 & 1.009 & 1.0 & 1.002 & 1.005 & 0.996  \\ 
 &  2048 & 0.992 & 1.021 & 0.98 & 1.005 & 0.974 & 1.012 & 1.009 & 1.002 & 0.998 & 0.997 & 1.015 & 1.012 & 1.0 & 0.996 & 1.009 & 1.001  \\ 
 &  4096 & 1.011 & 1.007 & 0.983 & 1.003 & 1.005 & 0.974 & 1.002 & 0.997 & 0.996 & 0.997 & 1.007 & 0.994 & 1.0 & 0.999 & 1.004 & 0.999  \\

\midrule
\midrule
\multicolumn{2}{c}{smollm3}  & 19.46 & 37.83 & 43.11 & 18.42 & 19.58 & 10.02 & 34.18 & 23.1 & 26.78 & 76.0 & 84.86 & 44.94 & 80.96 & 67.19 & 63.1 & 43.302 \\ 
\midrule 

\multirow{4}{*}{\ourmethod{}}
 &  256 & 0.777 & 0.847 & 0.919 & 1.127 & 1.097 & 0.848 & 0.937 & 0.928 & 0.979 & 0.757 & 1.009 & 0.939 & 0.704 & 0.965 & 0.894 & 0.915 \\
 &  512 & 0.89 & 0.919 & 1.003 & 1.243 & 1.201 & 0.915 & 0.942 & 0.924 & 0.991 & 0.921 & 1.031 & 0.974 & 1.017 & 1.008 & 0.987 & 0.998 \\
 &  1024 & 0.955 & 0.985 & 1.007 & 1.313 & 1.162 & 1.109 & 0.981 & 0.977 & 0.981 & 0.947 & 1.02 & 0.981 & 1.032 & 1.01 & 0.996 & 1.03 \\
 &  2048 & 0.917 & 1.024 & 1.012 & 1.244 & 1.135 & 1.135 & 0.988 & 0.99 & 0.996 & 0.974 & 1.014 & 1.006 & 0.989 & 1.006 & 0.996 & 1.028 \\
 &  4096 & 0.891 & 1.002 & 1.008 & 1.226 & 1.046 & 1.004 & 1.016 & 0.997 & 0.995 & 1.0 & 0.993 & 0.999 & 1.023 & 0.999 & 0.996 & 1.013 \\

\bottomrule
\end{tabular}
}
\vspace{-5mm}
\end{center}
\end{table*}

%% file: contents/appendix_math_500.tex
\section{Math 500 Results}

The MATH-500 benchmark is a test designed to challenge the mathematical reasoning capabilities of LLMs. It comprises 500 problems sourced from high-level math competitions like the AMC and AIME, spanning domains such as algebra, geometry, number theory, precalculus, combinatorics, and probability. Unlike simpler datasets, MATH-500 emphasizes multi-step problem solving and abstract reasoning, requiring models to produce precise, step-by-step solutions. It has become a key benchmark for comparing LLMs' mathematical proficiency and due to long reasoning traces, is a good way to test the effectiveness of selective attention models during generation. While the majority of focus on \ourmethod{} has been on its effectiveness with chunked prefill, it also performs remarkably well during decode as well. This is seen in \cref{table:math500} in which \ourmethod{} performs as well or better than other competing methods, with much smaller reasoning traces generally resulting in significantly shorter reasoning times. This suggests that more information is collated during reasoning with \ourmethod{} than competing methods.

\input{tables/math500}

%% file: tables/math500.tex
\begin{table*}[htbp!]
\caption{\textbf{Math 500 results} across different selective budgets utilizing \ourmethod{}. Reasoning generation length limited to 8192 tokens.}
\label{table:math500}
\begin{center}
\resizebox{.5\textwidth}{!}{
\begin{tabular}{cccccc}
\toprule

& 
{Budget} &
{Flex Match} & {Exact Match} & 
{Avg. Gen. Length}
\\

\midrule
\multicolumn{2}{c}{\hspace{-15mm} GPT-OSS-20b} &   0.893  &  0.694  &  2558.262 \\ 
\midrule
\midrule
\multirow{2}{*}{SparQ}
  & 128 & 0.744 &  0.569  &  3023.71 \\
& 256 &  0.848  &  0.648  &  2436.86 \\
\midrule

\multirow{2}{*}{Loki}
  & 128&  0.905  &  {\bf{0.711}}  & 2276.0 \\
& 256 &  0.903 &  {\bf{0.714}}  & 2181.49 \\
\midrule
\multirow{2}{*}{LessIsMore} 
   & 128 &  0.782  &  0.62  &  3591.4 \\
 & 256 &  0.85  &  0.681  &  2830.49  \\
\midrule

\multirow{2}{*}{\ourmethod{}}
&  128 &  {\bf{0.911}}  &  0.707  &  2199.93 \\
&  256 &  {\bf{0.913}} &  {0.711}  &  2539.11 \\

\midrule
\multicolumn{2}{c}{\hspace{-15mm} Qwen3-4B} &   0.82  &  0.701  &  1179.78 \\ 

\midrule
\midrule
\multirow{2}{*}{SparQ}
  & 128 & 0.727 &  0.615  &  2044.902 \\
& 256 &  0.777  &  0.658  &  1518.82 \\
\midrule

\multirow{2}{*}{Loki}
  & 128 &  0.785  &  0.668  & 1345.88 \\
& 256 &  0.821 &  0.702  & 1143.51 \\
\midrule
\multirow{2}{*}{LessIsMore} 
   & 128 &  0.729  & 0.621  &  2674.25 \\
 & 256 &  0.814  &  0.695  &  1618.00  \\
\midrule

\multirow{2}{*}{\ourmethod{}}
&  128 &  {\bf{0.828}}  &  {\bf{0.708}}  &  1119.46 \\
&  256 &  {\bf{0.823}} &  {\bf{0.703}}  &  1156.12 \\
\midrule

\multicolumn{2}{c}{\hspace{-15mm} Smollm3} &   0.659  &  0.522  &  629.04 \\ 

\midrule
\midrule
\multirow{2}{*}{SparQ}
  & 128 & 0.647 &  0.514  &  715.18\\
& 256 &  {\bf{0.665}}  &  {\bf{0.532}}  &  634.34 \\
\midrule

\multirow{2}{*}{Loki}
  & 128&  0.615  &  0.494  & 621.02 \\
& 256 &  0.654 &  0.52  & 618.47 \\
\midrule

\multirow{2}{*}{LessIsMore} 
   & 128 &  0.623  &  0.495  & 642.26  \\
 & 256 &  0.657  &  0.527  &  598.907  \\
\midrule

\multirow{2}{*}{\ourmethod{}}
&  128 &  \bf{0.66}  &  \bf{0.521}  &  578.10 \\
&  256 &  0.661 &  0.527  &  614.73 \\

\bottomrule
\end{tabular}
}

\end{center}
\end{table*}

%% file: contents/appendix_ablations.tex
\section{Ablations}
\input{tables/ablation_score_aggr}

In this section we report the results of several ablation studies. In \cref{table:ruler_ablation_cossim} we explore the effects of using cosine similarity instead of dot products to estimate the attention scores prior to aggregation in \ourmethod{}. Specifically, we apply our method varying both scoring methods in the Llama3.2-3B-Instruct model on the RULER benchmark, where it is clear that the cosine similarity provides more normalized, better aggreated scores for selective attention. In \cref{table:ruler_ablation} we demonstrate the differences between aggregation across the query dimension of the approximated attention scores using both the max and the mean. Again, the Llama3.2-3B-Instruct model and RULER benchmark were used, and the higher scores indicated that the max captures important outlying key query interactions better than the mean.

In order to determine if \ourmethod{} can be reliably used with different prefill chunk sizes, we use LongBench to test the Qwen3-4B model with both \ourmethod{} and sample attention across different $B_\text{CP}$. The results are presented in \cref{table:longbench_ablation_bpp} in which the number of queries selected $N_Q$ is $25 \%$ of $B_\text{CP}$ and the selctive budget $B_\text{SA} = 1024$. In this case we see no performance degradation when varying the prefill chunk size suggesting \ourmethod{} is robust to changes in this parameter.

\input{tables/longbench_ablation_bpp}

To explore the effects of varying the number of subselected queries during chunked prefill, we compared the performance of the Qwen3-4B model on the LongBench task using both \ourmethod{} and SampleAttention. During prefill, the number of queries subselected varied between $4$ and $128$. These results are presented in \cref{table:longbench_ablation_nq} in which it is clear that very little performance is lost even with a small number of queries used to approximate attention scores for KV subselection. This performance, particularly compared to the closest competitor SampleAttention, seems to justify our query subselection method.

\input{tables/longbench_ablation_nq}

%% file: tables/ablation_score_aggr.tex
\begin{minipage}[t]{0.45\textwidth}
\centering
\captionsetup{hypcap=false}
\captionof{table}{Scoring Ablation}
\begin{tabular}{ccccc}
\toprule

\multirow{2.5}{*}{Scoring} & \multicolumn{4}{c}{Ruler Test Length} \\
\cmidrule(lr){2-5} 
&
{4096} & {8192} & 
{16384} & {32768}
\\

\midrule
Dot Prod.  & 83.76 & 68.22 & 57.97& 46.24\\
Cos. Sim & \bf{88.56} & \bf{73.78} & \bf{65.57} & \bf{52.48}\\
\hline
\end{tabular}

\label{table:ruler_ablation_cossim}
\end{minipage}
\hfill
\begin{minipage}[t]{0.45\textwidth}
\centering
\captionsetup{hypcap=false}
\captionof{table}{Aggregation Ablation}
\begin{tabular}{ccccc}
\toprule

\multirow{2.5}{*}{Aggr.} & \multicolumn{4}{c}{Ruler Test Length} \\
\cmidrule(lr){2-5} 
&
{4096} & {8192} & 
{16384} & {32768}
\\

\midrule
 Mean & 84.28 & 69.14 & 58.81& 45.53\\
 Max & \bf{88.56} & \bf{73.78} & \bf{65.57} & \bf{52.48}\\
\hline
\end{tabular}
\label{table:ruler_ablation}
\end{minipage}

%% file: tables/longbench_ablation_bpp.tex
% \begin{table}[!t]
% \caption{\textbf{LongBench results for the Qwen3-4B model utilizing \ourmethod{} across $B_\text{CP}\in\{128, 256, 512\}$ with $B_\text{SA}=1024$ (Higher is better)}. Set $N_q$ to 25\% of $B_\text{CP}$. Relative scores compared to the dense model are reported.}
% \label{table:longbench_ablation_bpp}
% \begin{center}
% \resizebox{0.6\textwidth}{!}{
% \begin{tabular}{c ccc ccc}
% \toprule
% & \multicolumn{3}{c}{\ourmethod{}} & \multicolumn{3}{c}{SampleAttention} \\
% \cmidrule(lr){2-4} \cmidrule(lr){5-7}
% $B_\text{CP}$ & 128 & 256 & 512 & 128 & 256 & 512 \\
% \midrule
% \textit{Qwen3-4B} & \bf{0.978} & \bf{0.982} & \bf{0.977} & 0.876 & 0.871 & 0.872 \\
% \bottomrule
% \end{tabular}
% }
% \end{center}
% \end{table}

\begin{table}[!t]
\caption{\textbf{LongBench results for the Qwen3-4B model utilizing \ourmethod{} across $B_\text{CP}\in\{128, 256, 512\}$ with $B_\text{SA}=1024$ (Higher is better)}. Set $N_q$ to 25\% of $B_\text{CP}$. Relative scores compared to the dense model are reported.}
\label{table:longbench_ablation_bpp}
\begin{center}
\resizebox{0.4\textwidth}{!}{
\begin{tabular}{c ccc}
\toprule
% & \multicolumn{3}{c}{\ourmethod{}} & \multicolumn{3}{c}{SampleAttention} \\
% \cmidrule(lr){2-4} \cmidrule(lr){5-7}
$B_\text{CP}$ & 128 & 256 & 512 \\
\midrule
\ourmethod{} & \bf{0.978} & \bf{0.982} & \bf{0.977} \\
Sample Attention & 0.876 & 0.871 & 0.872 \\
\bottomrule
\end{tabular}
}
\end{center}
\end{table}

%% file: tables/longbench_ablation_nq.tex
% \begin{table}[!t]
% \caption{\textbf{LongBench results for the Qwen3-4B model utilizing \ourmethod{} across $N_Q\in\{4,8,16,32,64,128\}$ with $B_\text{SA}=1024$ and $B_{\text{CP}} = 128$ (Higher is better)}.  Relative scores compared to the dense model are reported.}
% \label{table:longbench_ablation_nq}
% \begin{center}
% \resizebox{0.8\textwidth}{!}{
% \begin{tabular}{c cccccc cccccc}
% \toprule
% & \multicolumn{6}{c}{\ourmethod{}} & \multicolumn{6}{c}{SampleAttention} \\
% \cmidrule(lr){2-7} \cmidrule(lr){8-13}
% $N_Q$ & 4 & 8 & 16 & 32 & 64 & 128 & 4 & 8 & 16 & 32 & 64 & 128  \\
% \midrule
% \textit{Qwen3-4B} & \bf{0.935} & \bf{0.957} & \bf{0.965} & \bf{0.978} & \bf{0.988} & \bf{0.995} & 0.854 & 0.862 & 0.88  & 0.878 & 0.885 & 0.884  \\
% \bottomrule
% \end{tabular}
% }
% \end{center}
% \end{table}

\begin{table}[!t]
\caption{\textbf{LongBench results for the Qwen3-4B model utilizing \ourmethod{} across $N_Q\in\{4,8,16,32,64,128\}$ with $B_\text{SA}=1024$ and $B_{\text{CP}} = 128$ (Higher is better)}.  Relative scores compared to the dense model are reported.}
\label{table:longbench_ablation_nq}
\begin{center}
\resizebox{0.8\textwidth}{!}{
\begin{tabular}{c cccccc}
\toprule
% & \multicolumn{6}{c}{\ourmethod{}} & \multicolumn{6}{c}{SampleAttention} \\
% \cmidrule(lr){2-7} \cmidrule(lr){8-13}
$N_Q$ & 4 & 8 & 16 & 32 & 64 & 128 \\
\midrule
\ourmethod{} & \bf{0.935} & \bf{0.957} & \bf{0.965} & \bf{0.978} & \bf{0.988} & \bf{0.995} \\
Sample Attention & 0.854 & 0.862 & 0.88  & 0.878 & 0.885 & 0.884  \\
\bottomrule
\end{tabular}
}
\end{center}
\end{table}